\newtheorem{lemma}{Lemma} 
\newtheorem{theorem}{Theorem} 
\definecolor{cvprblue}{rgb}{0.21,0.49,0.74}
\definecolor{top1}{gray}{0.58}  
\definecolor{top3}{gray}{0.72}  
\definecolor{top5}{gray}{0.87} 
\title{Towards Understanding 3D Vision: the Role of Gaussian Curvature}
\author{
    Sherlon Almeida da Silva¹²
\and
    Davi Geiger²
\and
    Luiz Velho³
\and
    Moacir Antonelli Ponti¹
\and
    {\normalsize ¹Instituto de Ciências Matemáticas e de Computação, Universidade de São Paulo (ICMC-USP)} \\
    {\normalsize ²Courant Institute of Mathematical Sciences, New York University (NYU)} \\
    {\normalsize ³Instituto de Matemática Pura e Aplicada (IMPA)}
\and
     {\tt\small \{sherlon.a, dg1\}@nyu.edu},
     {\tt\small lvelho@impa.br},
     {\tt\small moacir@icmc.usp.br}
}
\begin{document}

\maketitle

\begin{abstract}
Recent advances in computer vision have predominantly relied on data-driven approaches that leverage deep learning and large-scale datasets. Deep neural networks have achieved remarkable success in tasks such as stereo matching and monocular depth reconstruction. However, these methods lack explicit models of 3D geometry that can be directly analyzed, transferred across modalities, or systematically modified for controlled experimentation. We investigate the role of Gaussian curvature in 3D surface modeling. Besides Gaussian curvature being an invariant quantity under change of observers or coordinate systems, we demonstrate using the Middlebury stereo dataset that it offers a sparse and compact description of 3D surfaces. Furthermore, we show a strong correlation between the performance rank of top state-of-the-art stereo and monocular methods and the low total absolute Gaussian curvature. We propose that this property can serve as a geometric prior to improve future 3D reconstruction algorithms.
\end{abstract}
\section{Introduction}
\label{sec:intro}
Vision is a fundamental sensory modality that allows us to perceive and interpret the structure of our surroundings, playing a critical role in numerous robotics applications. However, purely data-driven deep learning approaches -- despite their success in tasks such as depth estimation -- often lack explicit and transferable geometric representations.


We emphasize that \textit{understanding} or \textit{explaining} the visual world is not the same scientific goal as developing algorithms purely for \textit{prediction}. For example, a predictive model might use visual video data and machine learning to accurately forecast the trajectory of a falling object. Although such a system may yield precise results, it does not necessarily uncover or convey the underlying physical law: namely, \( F = ma \), where \( F \) is the gravitational force and \( a = 9.8\, \mathrm{m/s^2} \) is the acceleration due to gravity.  More generally, explanatory models aim to reveal the underlying structure of the world. It is commonly believed that such models offer greater \textbf{simplicity} and \textbf{generalization power}, as exemplified in the classical laws of physics where \( F = ma \) is applied to other physical scenarios (not just objects falling under gravity).



The world around us is shaped both by natural processes and by human-made structures, which are commonly represented in stereo-vision by depth and disparity measurements. These metrics can accurately describe 3D geometry but depend heavily on the observer's position and viewpoint, limiting their robustness. In contrast, Gaussian curvature is a purely local geometric property of a surface that remains invariant under changes in viewpoint, making it ideal for robust 3D surface analysis.


Our main contribution is on \textit{Foundational Vision Understanding} by deepening our understanding of 3D scene geometry, paving the way for more \textbf{interpretable}, \textbf{generalizable}, and \textbf{reliable} vision systems. This is an analytical work grounded on data from state-of-the-art (SOTA) Deep Learning (DL) stereo algorithms. We investigate their benchmarking performance and how this insight about the importance of Gaussian curvature description of the 3D world is implicitly captured by stereo algorithms with indoor scenes on Middlebury dataset. We show that (i) Gaussian curvature is sparsely distributed across natural 3D surfaces; (ii)   Low Gaussian curvature magnitude can serve as a prior for regularizing or modeling 3D surface geometry; (iii) This prior may be implicitly captured by current SOTA algorithms, but it is not explicit as an independent module and so we cannot use it elsewhere; (iv) Low Gaussian curvature magnitude enables the definition of a novel unsupervised evaluation metric for assessing the quality of 3D surface reconstruction algorithms.

The contributions fall under the following two umbrella concepts:

\noindent-- \textbf{Scene Understanding:}  The identification of quantities that are zero or nearly zero across most of a scene -- thus encoding data with a minimal set of active components -- forms the foundation of sparse representation. We show in Section~\ref{sec:sparse-representation} that the Gaussian curvature magnitude fulfills this role for 3D data representations of indoor environments. 

\noindent-- \textbf{Explainable AI:} In Section~\ref{sec:GC-metric} we empirically verify that the SOTA stereo algorithms seem to apply low Gaussian curvature magnitude priors. However, it is unclear whether or where these priors are explicitly used, making it impossible to isolate reusable algorithmic modules, in analogy to modules that perform feature extraction, from current depth reconstruction methods.

A potential immediate application of this explainability study is to use low Gaussian curvature  magnitude as an unsupervised metric to regularize models and/or evaluate 3D reconstruction algorithms.

\section{Previous Work}

We begin by highlighting the influential book by Koenderink~\cite{Koenderink90}, in which a differential geometry framework is used to provide a rich theoretical foundation for understanding surface geometry in vision. However, it does not address the construction of a sparse representation nor explore Gaussian curvature as a key feature for computer vision. The field of sparse image representation gained prominence in image analysis through the work of Field and Olshausen~\cite{Field94,OlshField97}, who highlighted the statistical regularities present in natural images and proposed sparse coding models inspired by the processing mechanisms of the visual cortex. Since then, these ideas have been extended to various visual domains, including, but not limited to, texture modeling~\cite{ZhuMumford98}, illumination and shape analysis~\cite{Yuille95,BasriJacobs2001}, template matching~\cite{Geigeretal95}, and image restoration~\cite{Mairal2009}. 

Some work worth mentioning as they address some aspects of the topics presented here. In studying visual illusions, Ishikawa and Geiger~\cite{ishikawa2006illusory} noted that humans may have low Gaussian curvature priors when reconstructing surfaces. A recent application of imposing the zero Gaussian curvature on man-made CAD reconstruction indicates the value of such extreme case (zero Gaussian curvature) for designing industrial applications \cite{NeurCADRecon2024}. Also, Guo~\cite{guo20103d} introduced the Gaussian Curvature Co-occurrence Matrix, combining curvature with co-occurrence statistics for 3D shape representation. Zhong and Qin~\cite{zhong2014sparse} proposed a sparse approximation for 3D shapes using spectral graph wavelets to capture local geometric details. Ververas et al.~\cite{ververas2024sags} apply a curvature-based densification step to populate an underrepresented area. These studies highlight the potential of integrating curvature and sparse representations in 3D geometry analysis.

In contrast to previous approaches to 3D surface understanding, which often focus on quantities defined within specific coordinate frames or based on appearance~\cite{zhou2024comprehensive,wang2024dust3r}, our work seeks a geometric quantity that is invariant to the choice of observer or coordinate system, a property that is especially desirable for 3D surface analysis. To the best of our knowledge, there have been no prior studies applying sparse representation principles to Gaussian curvature in 3D geometry.


\section{Gaussian Curvature (GC)}
\label{sec:Gaussian-Curvature}
We begin by briefly reviewing the concept of GC and explaining its relevance to our study. Next, we discuss existing methods for estimating GC. Finally, we describe the synthetic 3D scenes we generated to enable controlled curvature analysis and benchmarking.

\subsection{Brief Review}

 \textbf{GC}  of a smooth surface at a given point is given by \cite{do2016differential}:
\begin{align}
  K = \kappa_1 \kappa_2, 
  \label{eq:Gaussian-Curvature}
\end{align}
where \( \kappa_1 \) and \( \kappa_2 \) are the {principal curvatures} of a surface, i.e.,  the maximum and minimum normal curvatures. 
The sign of the GC indicates the local shape of the surface: (i) Positive GC (\( K > 0 \)) and the surface is locally convex. Both principal curvatures have the same sign, e.g. sphere of radius \( r \) has constant curvature $K = \frac{1}{r^2}$; (ii) Zero GC (\( K = 0 \)) and one of the principal curvatures is zero, e.g. plane or cylinder; (iii) Negative GC (\( K < 0 \)) and the principal curvatures have opposite signs, e.g. a hyperboloid of one sheet or the inner region of a torus. Note that a GC has dimensions inverse to the square of distance.
Gauss's Theorema Egregium states that \textbf{GC is an intrinsic property} of a surface. 



This invariance makes GC particularly valuable for computer vision and geometric modeling, as it provides a consistent descriptor regardless of viewpoint or deformation. The Gauss–Bonnet theorem further reinforces its significance by relating the integral of GC over a surface to its Euler characteristic, thereby bridging local geometric information with global topological structure. Moreover, previous studies have shown that low GC plays a role in human shape understanding (see \cite{ishikawa2006illusory}). 


\subsection{Methods to estimate GC from data}
In order to estimate a GC  we use the formula \cite{do2016differential}:
\begin{align}
    K={\frac {\det(\mathrm {I\!I} )}{\det(\mathrm {I} )}}\, . 
    \label{eq:Gaussian-curvature-formula}
\end{align}
where the fundamental forms I and II can be obtained from a parametric surface $\mathbf{Z}(u, v)$, its derivatives   $\mathbf{Z}_u(u, v),\mathbf{Z}_v(u, v)$ and second derivatives $\mathbf{Z} _{uu}, \mathbf{Z} _{uv}, \mathbf{Z} _{vv}$.
More details of the formula are in the supplementary material.

We construct the parametrized surface from the depth data. More precisely, given a set of \( n \) depth measurements \( \{ d_i(x_i, y_i) \}_{i=1}^n \), we represent the surface as a 3D point cloud \( \{ (X_i, Y_i, d_i(x_i, y_i))_{i=1}^n \} \) where $X_i,Y_i$ are obtained by inverting the projection, that is, $X_i = \frac{(x_i - c_x)}{f_x} d_i(x_i, y_i) $ and $Y_i = \frac{(y_i - c_y)}{f_y} d_i(x_i, y_i) $, before assigning a parametrized surface $\mathbf{Z}(u, v)$. Since numerical differentiation tends to amplify noise, especially when applied to noisy depth data, we apply smoothing when necessary. Importantly, we smoothed the 3D point cloud with Gaussian smoothing over each 3D component, $X, Y, Z$, independently. Note that one should not smooth the raw depth map, as the depth values are not uniformly sampled with respect to surface distances, and the smoothing in the image space does not adequately account for the true geometry (see Figures~\ref{fig:distance_discretization} and~\ref{fig:height_map_x_3D_point_cloud}). 

\begin{figure}
    \begin{center}
    \includegraphics[width=1.\columnwidth]{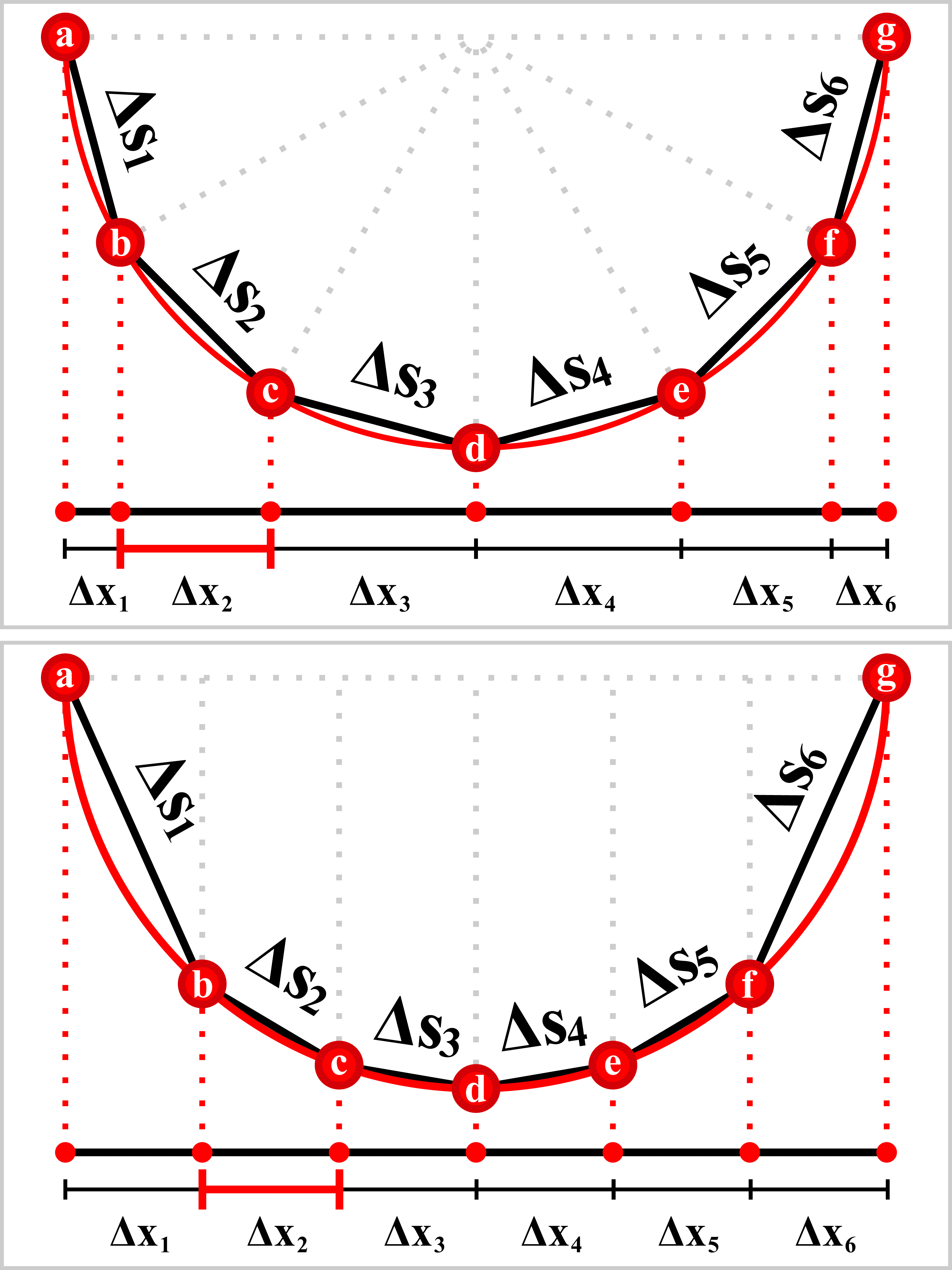}
    \end{center}
        \caption{\textbf{a,b,...,g} are sample points of a surface $z(x,y)$ along a slice $z(x, y_0)$. GC require calculations with surface distances between points, here denoted by $\{\Delta s_i; i=1\hdots 6\}$ and shown on a curve, a slice of a surface.  a. Top: The distances are constant, i.e.,  $\Delta s_1=\Delta s_2=\hdots =\Delta s_6 $. Resulting in non-uniform projected distances $\Delta x_i; i=1\hdots 6$. b. Bottom: The projected distances $\Delta x_i; i=1\hdots 6$ are constant, but then the distances $\Delta s_i; i=1\hdots 6$ between surface points are non-uniform.}
    \label{fig:distance_discretization}
\end{figure}

\begin{figure}
    \begin{center}
    \includegraphics[width=1.\columnwidth]{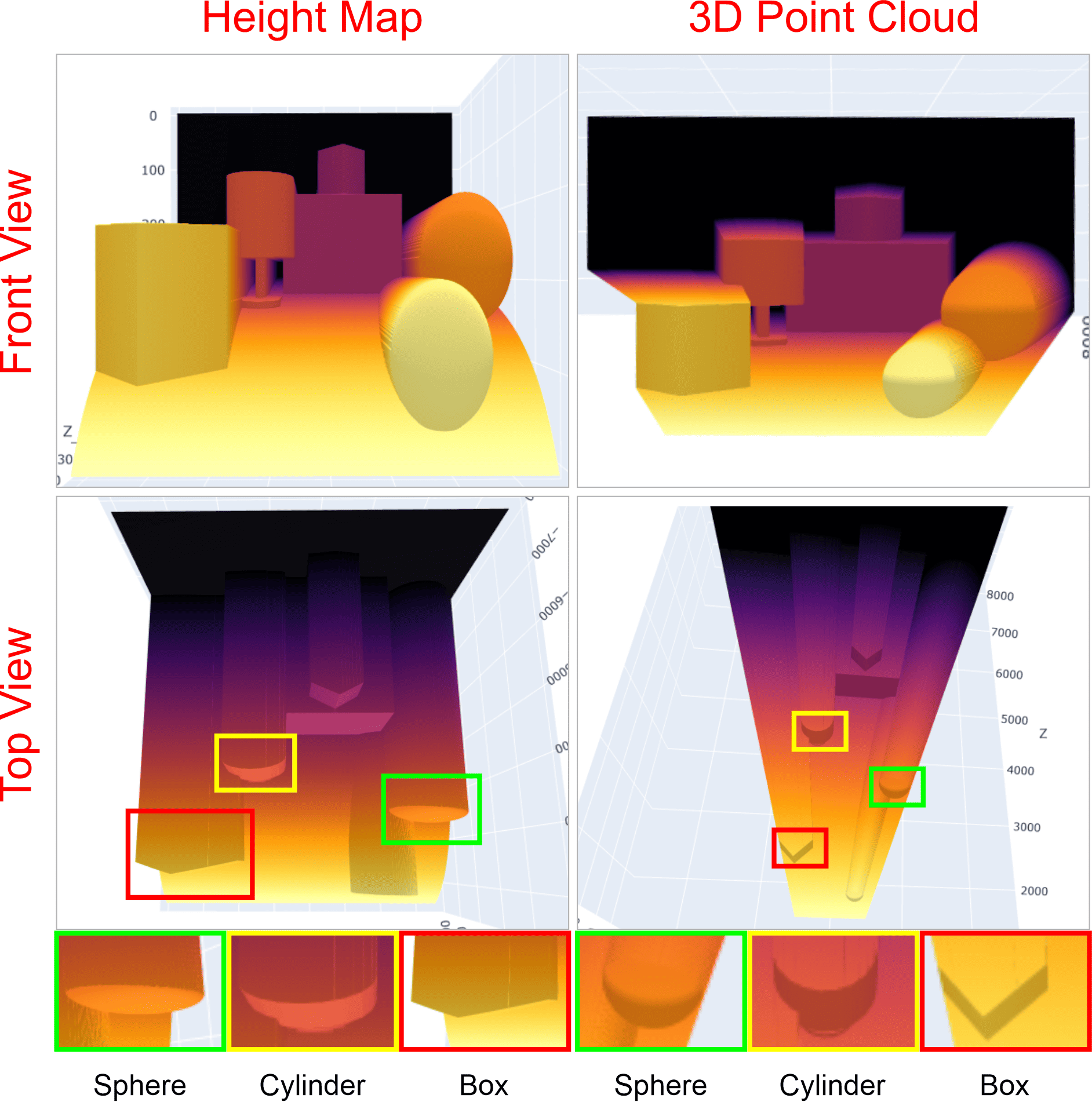}
    \end{center}
        \caption{This figure visually represents the $\Delta x$ uniform distances on the Height Map (left) and the $\Delta s$ uniform distances on the 3D Point Cloud. Observe that not dealing with the correct GC formula may lead to a wrong curvature analysis.}
    \label{fig:height_map_x_3D_point_cloud}
\end{figure}

\subsection{3D Scenes for Curvature Analysis}
\label{sec:synthetic_dataset}

In order to evaluate algorithms that estimate GC on shapes we created five 3D synthetic scenes\footnote{3D Synthetic Scenes Page: https://github.com/SherlonAlmeida/Stereo-Cameras-Simulation} composed of developable surfaces (zero GC), such as cylinders, boxes, and planes, and spheres representing a convex surface with constant positive curvature. Despite the simplicity of the scenes, it provides complete control over stereo image acquisition and precise ground truth for GC analysis.

The simulated environment was created using Unity 3D. We created two physical cameras inspired on Middlebury settings, with \textbf{Sensor Size (H,W)} = (14.8, 22.2)mm, \textbf{Image Size (H,W)} = (2000, 3000)px, \textbf{Focal Length (mm, px)} = (35, 4729.73) and \textbf{Pixel Size} = 0.0074mm. The cameras were YZ-aligned, and X-shifted by 200 millimeters to guarantee the images are rectified. Thus, \textbf{Baseline} = 200mm.

\autoref{fig:synthetic_dataset_rgbd} illustrates the five synthetic scenes created. As part of our ongoing work, we plan to expand the number of 3D synthetic scenes by including additional objects with known GC. For a more detailed analysis of GC, particularly in the MainScene, please refer to \autoref{fig:synthetic_dataset_mainscene_sections}.

\begin{figure*}[!ht]
    \begin{center}
    \includegraphics[width=\textwidth]{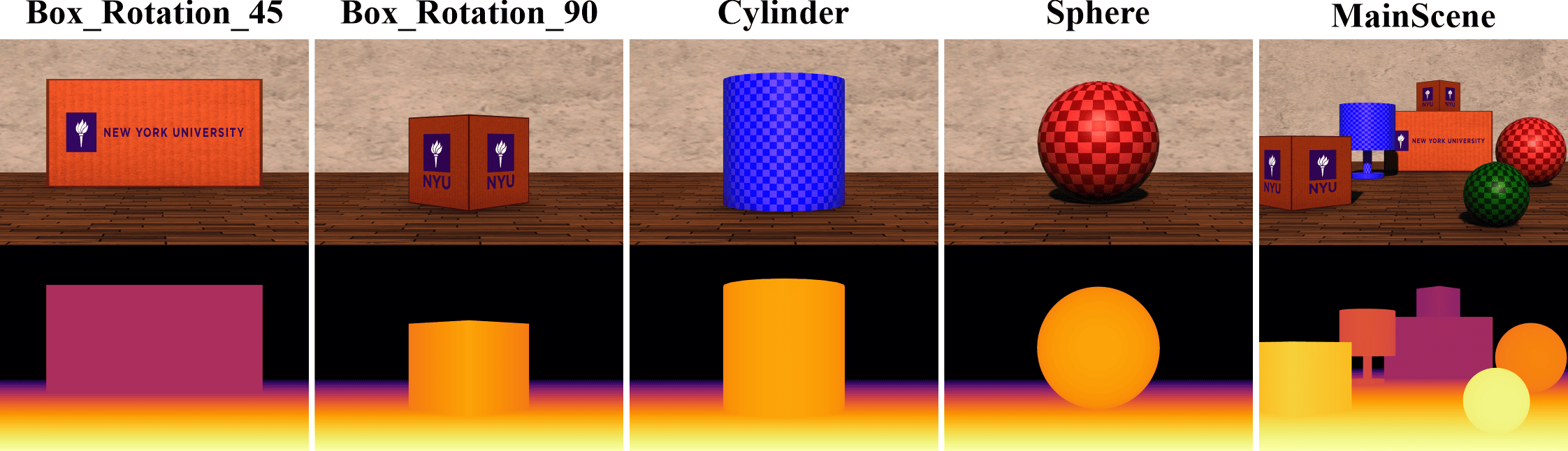}
    \end{center}
        \caption{\textbf{3D Synthetic Scenes:} Five scenes for depth estimation and curvature analysis, where: Box\_Rotation\_45, Box\_Rotation\_90, and Cylinder have $K = 0$; Sphere $K = \frac{1}{r^2}$; and MainScene mix all these objects in a scene. We provide the depth in meters (in this paper), left and right images with (2000, 3000)px resolution. The two spheres in the MainScene have the radius of $r=0.25$m and $r=0.125$m.
        }
    \label{fig:synthetic_dataset_rgbd}
\end{figure*}

\begin{figure*}[!ht]
    \begin{center}
    \includegraphics[width=\textwidth]{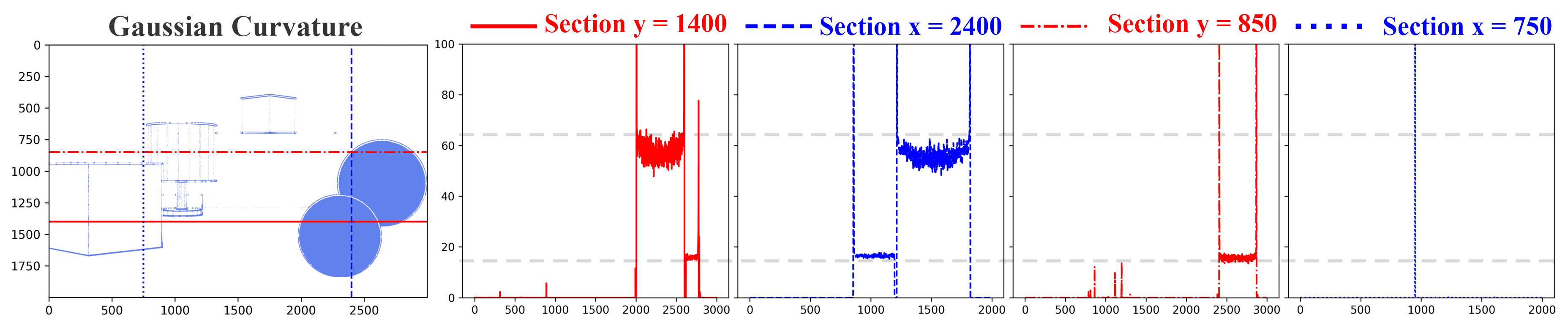}
    \end{center}
        \caption{GC Cross-sections on the MainScene data. Since second-order derivatives are sensitive to small changes,  we smooth the slices with a Gaussian filter of $\sigma = 2\, m $. The Y-section is plotted from left to right, and each X-section from top to bottom. Note (1) high curvature at the edges of 3D objects in the scene; (2) Approximate constant positive curvature inside the spheres, specifically 16 m$^{-2}$ for the red sphere, and 64 m$^{-2}$ for the green sphere, which are the correct GC values. (3) Zero curvature in the remaining areas.}
    \label{fig:synthetic_dataset_mainscene_sections}
\end{figure*}
\section{A Sparse Representation of 3D data}
\label{sec:sparse-representation}
We investigate a sparse representation associated with depth data by analyzing the Middlebury stereo dataset~\cite{scharstein2014high}, which provides accurate ground truth (GT) depth maps.

The Middlebury dataset includes a training partition, where GT data is publicly available.
The training dataset contains $15$ images, each of size approximately $2,000 \times 3,000$ pixels, yielding on the order of $9 \times 10^7= 15  \times 2,000 \times 3,000$ ground truth depth values. 

Middlebury results are provided as disparity maps, so we evaluated the GC of the 15 training images as follows:   we computed $Depth = \frac{f \cdot b}{d + doffs}$, where $f$ is the focal length in pixels, $b$ is the baseline in meters, and $d + doffs$ is the disparity value in pixels corrected by the center of projection offset. For each pixel, we compute the GC (throwing away boundary data) and aggregate the results and normalize them into a histogram, shown in~\autoref{fig:histogram-Gaussian-curvature-middleburry}.

\begin{figure}
    \centering
    \includegraphics[width=1.\columnwidth]{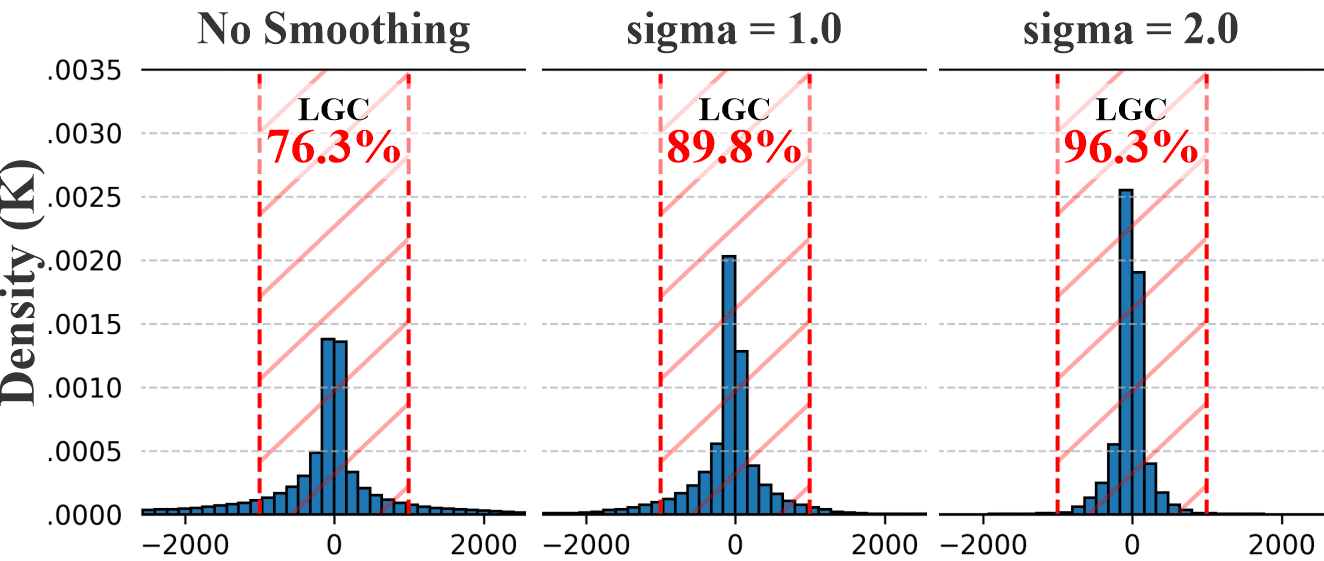}
    \vspace{0.35em}
    \captionof{figure}{Left: Normalized histogram of the GC, in units of inverse of square meters, across the 15 training images from the Middlebury dataset. Middle and Right: smoothing by $\sigma=1.0\, m, 2.0\, m$ respectively. We discarded the highest 20\% of $|K|$ values, so we discard depth boundary data to focus on items,  and the remaining $K$ values range between $[-32,370.4; 14,038.9]$. For visualization we plot $K$ values within $[-2,500; 2,500]m^{-2}$ in 30 bins uniformly distributed. Note that increased smoothing results in a higher LGC measure.}
    \label{fig:histogram-Gaussian-curvature-middleburry}
\end{figure}

Let us denote by \( h(K) \) the \textit{normalized histogram} of GC values computed empirically over the Middlebury dataset. This histogram can be interpreted as an \textit{empirical prior distribution} over GC. Based on this, we define a prior model over 3D surface geometry as
\begin{equation}
P(K) =  e^{-\mathcal{L}(K)},
\label{eq:histogram-density}
\end{equation}
where the loss function \( \mathcal{L}(K) \) is defined by
\begin{equation}
\mathcal{L}(K) = - \ln h(K).
\end{equation}
Our empirical findings suggest that GC is sparsely distributed in real-world 3D scenes, supporting our hypothesis that it captures structural regularities in surface geometry.

A practical and interpretable approximation to this loss is a  regularizer:
\begin{equation}
\mathcal{L}(K) = \alpha \, |K|^{\frac{1}{2}}= \alpha \, \sqrt{|\kappa_1 \, \kappa_2|},
\label{eq:loss-function-Gaussian-curvature}
\end{equation}
where \( \alpha > 0 \) is a weighting parameter in units of distance that helps match the empirical distribution. This loss function is the sparse $L^0$ norm of quantities $|\kappa_1|$ and $ |\kappa_2|$, i.e., 
\begin{align}
 \sqrt{|\kappa_1 \, \kappa_2|} = \lim_{p\rightarrow 0} 
\left( \frac{1}{2} (|\kappa_1|^p + |\kappa_2|^p)\right)^{\frac{1}{p}} \, ,
\label{eq:GC-L0}
\end{align}
(see \cite{Hardy52}. For completion we placed a proof of it in the supplementary material).
Thus equation~\eqref{eq:loss-function-Gaussian-curvature} is indeed the sparse loss function that aim to have as few components as possible of the principal curvatures. 

A natural metric to evaluate the sparsity of the GC distribution in an image or dataset is the Shannon entropy of \( h(K) \). Alternatively, we propose a simpler to compute and interpretable metric, which we call \textit{Low Gaussian Curvature (LGC)} metric, where a low entropy should map to a high LGC. This metric is defined as the percentage of all computed GC values that lie within a specified range, \( [-W, W] \). To avoid the impact of high values near surface boundaries (depth discontinuities), we remove the top 20\% of the highest \( |K| \) values from the entire analysis. As shown in \autoref{fig:histogram-Gaussian-curvature-middleburry}, 76.3\% of the GC values in the Middlebury dataset are concentrated in the range of \( W = 1{,}000 \, \mathrm{m}^{-2} \). After applying smoothing with $\sigma=2\, m$ to the data, the LGC increases to 96.3\% of values that fall within \( [-1{,}000; 1{,}000] \, \mathrm{m}^{-2} \), while the maximum absolute curvature reaches \( |K|_{\text{max}} = 32{,}370.4 \, \mathrm{m}^{-2} \).
\section{An analysis on depth reconstruction}
\label{sec:GC-metric}




We design experiments to evaluate to what extent depth-reconstruction algorithms incorporate  a sparse representation for the magnitude of the GC. These experiments are carried out on indoors scenes from both real-world data, using the Middlebury dataset
and controlled synthetic environments from our created 3D synthetic scenes.

\subsection{Middlebury Dataset} \label{subsec:middlebury_dataset}

The dataset includes a training partition, where GT data is publicly available, and a test partition, where only some calibration information and the left-right image pairs are provided.
In both partitions, algorithm rankings are available; however, only in the training set can the disparity map results of submitted methods be downloaded for further analysis. We obtained the full-resolution results of SOTA techniques\footnote{There are other relevant methods reported in the literature; however, our selection was based on top-ranked methods on the Middlebury, KITTI, and ETH3D benchmarks and the availability of source code.}, organized into two groups: \textbf{Group A)} FoundationStereo~\cite{wen2025foundationstereo}, LG-Stereo\footnote{Papers and code for LG-Stereo, BLMT-Stereo, and DepthFocus are not available yet.}, DEFOM-Stereo~\cite{jiang2025defom}, MonoStereo~\cite{cheng2025monster}, MonSter++~\cite{cheng2025monster++}, and AIO-Stereo~\cite{zhou2025all} which represent the new generation of stereo approaches that combine stereo matching with monocular features; and \textbf{Group B)} RAFT-Stereo~\cite{lipson2021raft}, CREStereo~\cite{li2022practical}, DLNR~\cite{zhao2023high}, Selective-IGEV~\cite{wang2024selective}, and S2M2~\cite{min2025s2m2}, corresponding to previous SOTA techniques. We also evaluated the results of BLMT-Stereo\footnotemark[\value{footnote}] and DepthFocus\footnotemark[\value{footnote}], two new approaches currently under review for CVPR 2026. Although the papers and source codes were not available at the time of our submission, we were able to analyze their results using the GC evaluation.

The experiment on~\autoref{fig:middlebury_dataset_avgerr_x_curvature_analysis} compares the disparity AvgError performance and the median $|K|$ values of GC for all techniques. Note that the new SOTA methods (Group A) are the best-performing approaches in the Middlebury ranking and tend to estimate lower curvature magnitudes compared to the Group B methods. \autoref{tab:eval_metrics} presents a summary of the evaluation metrics.

\begin{table}
    \centering
    \resizebox{\columnwidth}{!}{
    \begin{tabular}{cc}
        \toprule
        \textbf{Metric} & \textbf{Definition} \\
        \midrule
        \makecell{Average  Absolute Error \\ (AvgError)} &
        $\displaystyle \frac{1}{MN}\sum_{i=1}^{M}\sum_{j=1}^{N} |d_{ij} - \hat{d}_{ij}|$ \\
        \midrule
        
        \makecell{Root Mean Squared Error \\ (RMS)} &
        $\displaystyle 
        \sqrt{\frac{1}{MN}\sum_{i=1}^{M}\sum_{j=1}^{N} (d_{ij} - \hat{d}_{ij})^2}$ \\
        \midrule
        
        \makecell{Bad-N \\ (\% of pixels with error $>N$)} &
        $\displaystyle 100 \times 
        \frac{\big| \{\, (i,j) \;|\; |d_{ij} - \hat{d}_{ij}| > N \,\} \big|}
        {\big| \{\, (i,j) \;\text{ valid pixels } \} \big|}$ \\
        \midrule
        
        \makecell{Average Absolute \\ Gaussian Curvature (Avg$|K|$)} &
        $\displaystyle \frac{1}{MN}\sum_{i=1}^{M}\sum_{j=1}^{N} |K_{ij}|$ \\
        \midrule
        
        \makecell{Normal Error \\ (NormalsErr)} &
        \makecell{$\displaystyle
        \frac{1}{|\Omega|}
        \sum_{(i,j)\in \Omega}
        \left(
        1 -
        \frac{
        \mathbf{n}_{ij}^{\text{gt}} \cdot \mathbf{n}_{ij}^{\text{pred}}
        }{
        \lVert \mathbf{n}_{ij}^{\text{gt}} \rVert\,
        \lVert \mathbf{n}_{ij}^{\text{pred}} \rVert
        }
        \right)
        $\\where $\Omega$ is the set of valid pixels.}\\
        \midrule
        
        \makecell{Low Gaussian \\ Curvature (LGC)} &
        $\displaystyle
        100 \times
        \frac{
        \left|\{(i,j)\mid -W \le K_{ij} \le W\}\right|
        }{
        \left|\{\,(i,j)\;\text{valid pixels}\,\}\right|
        }$ \\
        \bottomrule
    \end{tabular}
    }
    \caption{Summary of evaluation metrics adopted in this work, covering disparity and depth errors, curvature measurements, and surface-normal consistency.}
    \label{tab:eval_metrics}
\end{table}

\begin{figure}
    \begin{center}
    \includegraphics[width=\columnwidth]{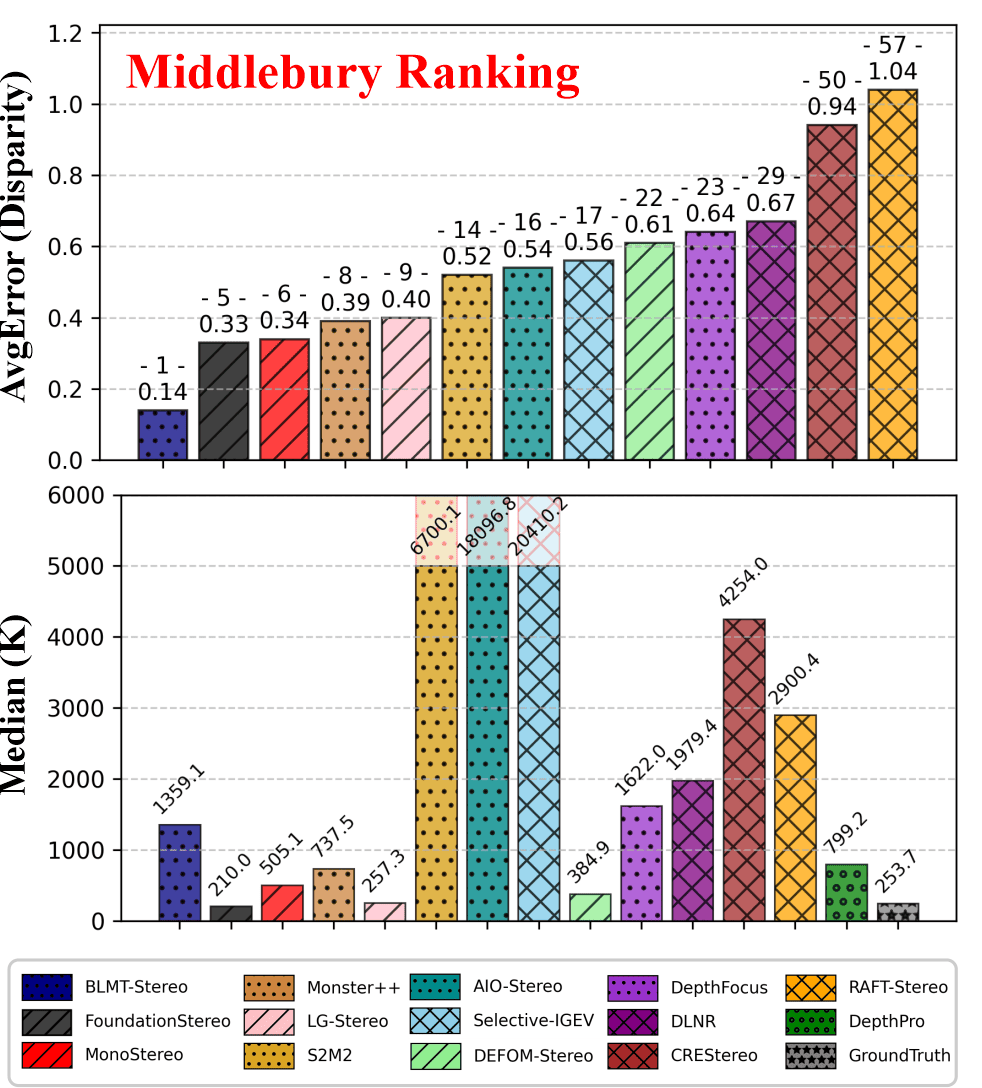}
    \end{center}
        \caption{\textbf{Middlebury Dataset - Curvature x AvgError Analysis:} Top: shows the overall position of each approach in the Middlebury ranking, along with its average disparity error (AvgError). Bottom: presents the median of absolute GC for all techniques and the GT.}
        \label{fig:middlebury_dataset_avgerr_x_curvature_analysis}
\end{figure}

A more detailed experiment can be seen on~\autoref{fig:middlebury_dataset_curvature_distribution} and~\autoref{tab:middlebury_benchmark}, where we analyze the GC distribution for each technique. Note that FoundationStereo has 81.3\% of its GC between $[-1{,}000, 1{,}000]m^{-2}$ $K$ values. Additionally, the same trend of minimizing GC across the 15 Middlebury training set was observed in the \colorbox{top5}{Top 5} best techniques, all of them from Group A\footnote{We invite the reader to refer to the Supplementary Material for a more comprehensive study.}.
Despite achieving low AvgError, techniques from Group B exhibit also lower LGC, with a tendency toward higher $|K|$ values compared to those observed in the \colorbox{top5}{Top 5} Group A techniques.

\begin{figure}
    \begin{center}
    \includegraphics[width=1.0\columnwidth]{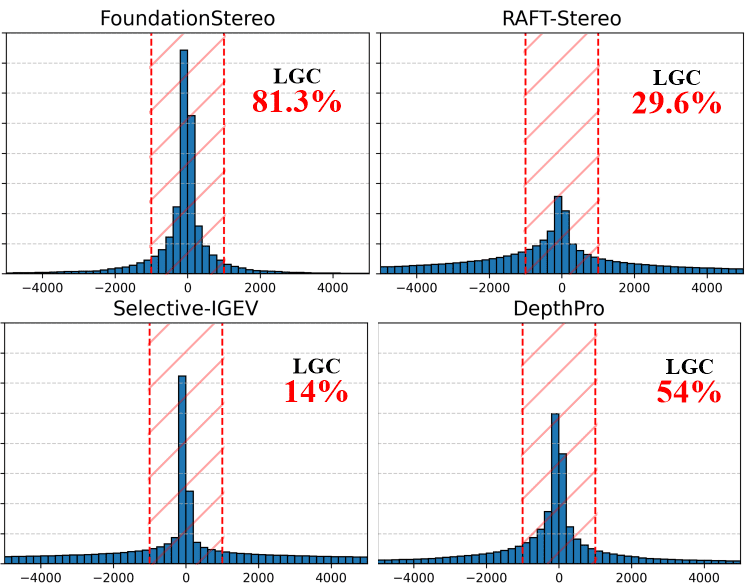}
    \end{center}
        \caption{\textbf{Curvature Distribution:} each plot presents a normalized histogram of the GC distribution for all 15 training images from the Middlebury dataset. We discarded the highest 20\% of $|K|$ values to avoid depth discontinuities, and plotted the remaining $K$ values within $[-4,000, 4,000]m^{-2}$ in 50 bins uniformly distributed.
    }
    \label{fig:middlebury_dataset_curvature_distribution}
\end{figure}

\begin{table*}[t]
\centering
\footnotesize
\setlength{\tabcolsep}{24pt} 
\begin{tabular}{lccccc}
\toprule
Technique        & LGC  & AvgError $\downarrow$ & RMS $\downarrow$ & Bad 2.0 $\downarrow$ & Bad 4.0 $\downarrow$ \\
\midrule
FoundationStereo~\cite{wen2025foundationstereo} & 81.3 & \cellcolor{top3}$0.33^{2}$  & \cellcolor{top5}$2.86^{4}$  & \cellcolor{top3}$0.79^{2}$  & \cellcolor{top3}$0.40^{3}$ \\
LG-Stereo                                       & 78.1 & \cellcolor{top5}$0.40^{5}$  & \cellcolor{top5}$2.94^{5}$  & \cellcolor{top5}$1.04^{5}$  & \cellcolor{top5}$0.44^{4}$ \\
DEFOM-Stereo~\cite{jiang2025defom}              & 69.2 & $0.61^{9}$  & $3.66^{8}$  & $2.26^{10}$ & $1.19^{10}$ \\
MonoStereo~\cite{cheng2025monster}              & 64.6 & \cellcolor{top3}$0.34^{3}$  & \cellcolor{top3}$2.46^{2}$  & \cellcolor{top3}$0.94^{3}$  & \cellcolor{top3}$0.35^{2}$ \\
Monster++~\cite{cheng2025monster++}             & 56.5 & \cellcolor{top5}$0.39^{4}$  & \cellcolor{top3}$2.54^{3}$  & $1.17^{6}$  & \cellcolor{top5}$0.47^{5}$ \\
BLMT-Stereo                                     & 43.1 & \cellcolor{top1}$0.14^{1}$  & \cellcolor{top1}$1.33^{1}$  & \cellcolor{top1}$0.17^{1}$  & \cellcolor{top1}$0.09^{1}$ \\
DepthFocus                                      & 39.9 & $0.64^{10}$ & $4.27^{11}$ & $1.84^{7}$  & $1.08^{9}$ \\
DLNR~\cite{zhao2023high}                        & 36.3 & $0.67^{11}$ & $3.90^{10}$ & $2.92^{11}$ & $1.41^{11}$ \\
RAFT-Stereo~\cite{lipson2021raft}               & 29.6 & $1.04^{13}$ & $5.25^{13}$ & $5.25^{13}$ & $2.89^{13}$ \\
CREStereo~\cite{li2022practical}                & 25.8 & $0.94^{12}$ & $5.21^{12}$ & $4.01^{12}$ & $2.04^{12}$ \\
S2M2~\cite{min2025s2m2}                         & 17.5 & $0.52^{6}$  & $3.73^{9}$  & \cellcolor{top5}$1.00^{4}$  & $0.59^{6}$ \\
AIO-Stereo~\cite{zhou2025all}                   & 15.3 & $0.54^{7}$  & $3.50^{6}$  & $1.97^{8}$  & $0.82^{7}$ \\
Selective-IGEV~\cite{wang2024selective}         & 14.0 & $0.56^{8}$  & $3.57^{7}$  & $2.18^{9}$  & $0.92^{8}$ \\

\midrule
Ground Truth     & 76.3 & ---    & --- & ---     & ---     \\
\bottomrule
\end{tabular}
\caption{This table presents the Middlebury benchmark ranking for the 15 training images, with techniques listed in descending LGC order. Superscripts indicate each method’s rank among the compared techniques for the metrics AvgError, RMS, Bad 2.0, and Bad 4.0. Darker cell shading highlights better performance, indicating the technique is among the \colorbox{top1}{Top 1}, \colorbox{top3}{Top 3}, or \colorbox{top5}{Top 5} best approaches. Notably, top-performing methods (Group A) generally exhibit higher LGC (i.e., lower GC).}
\label{tab:middlebury_benchmark}
\end{table*}

Considering that humans easily perceive the shape of flat objects (\eg walls, ground, doors, etc), in the following experiment we present a visual analysis of curvature, using the ``Piano'' data from Middlebury dataset. We compared the $K$ values of GC from the Middlebury's provided GT and the FoundationStereo reconstruction, which is the best (lowest AvgError) on the Middlebury ranking for this data with public code. We show the left image, and depth for GT and FoundationStereo in~\autoref{fig:middlebury_dataset_piano_depth}, emphasizing the piano bench and sheet music, that we known are developable surfaces with $K = 0$. Despite good depth estimation, Middlebury's GT provides a binary mask with measurement problems for each disparity map. Besides filtering the highest $|K|$ values of the GT, previously we removed these NaN values from the GC analysis for GT. An example of these problematic data can be seen as black positions on the GT depth in~\autoref{fig:middlebury_dataset_piano_depth}.

\begin{figure}
    \begin{center}
    \includegraphics[width=\columnwidth]{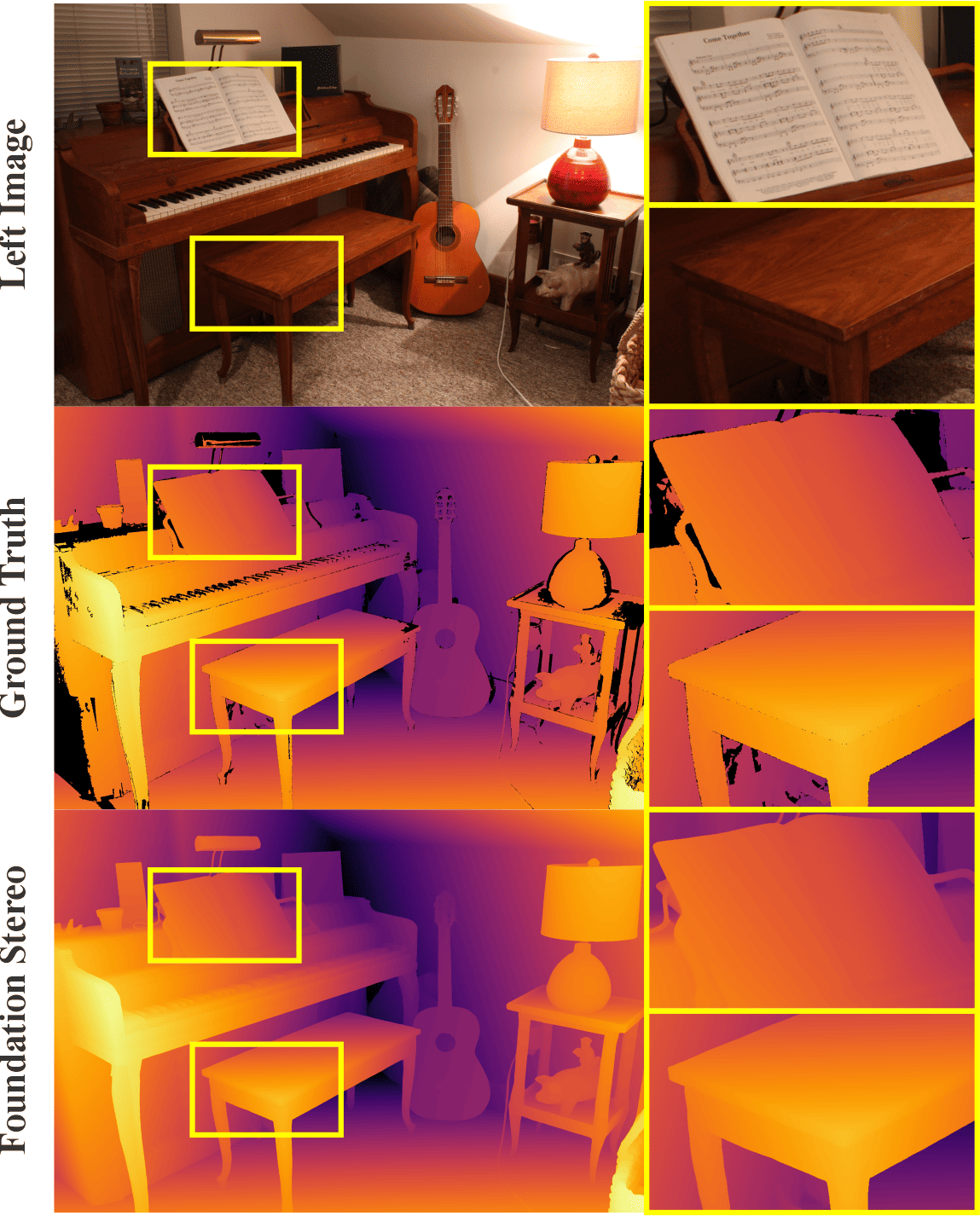}
    \end{center}
    \caption{RGB left image x Depth.}
    \label{fig:middlebury_dataset_piano_depth}
\end{figure}

In~\autoref{fig:middlebury_dataset_piano} the GT, FoundationStereo, and Selective-IGEV present a noisy representation of GC (first row), then we also present $K$ values smoothed ($\sigma = 2.0\, m$). Also, the non-black coordinates represent $|K|$ values lower than 1,000$m^{-2}$.

Observe that edges throughout the image exhibit abrupt changes in curvature, while flat surfaces do not. Another interesting observation is that the non-smoothed reconstruction from FoundationStereo produces $|K| > 1{,}000m^{-2}$ in some areas of the lampshade — a cylindrical surface that should have a constant $K = 0$ — as well as higher $K$ values on the music notes printed on the sheet music, which, as a flat piece of paper, should also have $K = 0$. On the other hand, the bottom part of the lampshade shows an approximation of positive (red) $K$ values, which is expected given the spherical shape's positive principal curvatures. Looking at the $K$ values estimated by Selective-IGEV, it is evident that the output is noisier than those of other approaches. While FoundationStereo exhibited low GC on flat surfaces even in the original output, Selective-IGEV only revealed clearer structures after smoothing.

In \autoref{tab:middlebury_benchmark_normals}, we extend our analysis to the normal error between the reconstructed point clouds from each technique and the GT across all 15 Middlebury training images. Notably, the top-performing (\colorbox{top5}{Top 5}) methods in terms of Normal Average Error (NormAvg) belong to Group A. Moreover, Selective-IGEV showed poor alignment of surface normals with the expected GT across all 15 images.

In Figures~\ref{fig:normals_piano} and~\ref{fig:normals_playtable}, we qualitatively present the normal reconstructions for the GT, BLMT-Stereo, Foundation-Stereo, and Selective-IGEV on the Piano and Playtable data, respectively. BLMT-Stereo and Foundation-Stereo produce depth maps with more coherent surface normals compared to Selective-IGEV. While BLMT-Stereo aligns normals more accurately in thin structures and near depth discontinuities, Foundation-Stereo achieves better consistency in regions with low absolute Gaussian curvature. In both cases, Selective-IGEV struggles, producing surfaces with high absolute GC and poorer normal alignment with the GT.

\begin{table*}[t]
\centering
\resizebox{\textwidth}{!}{
\begin{tabular}{lcccccccccccccccc}
\toprule
Technique & NormAvg $\downarrow$ & Adirondack & ArtL & Jadeplant & Motorcycle & MotorcycleE & Piano & PianoL & Pipes & Playroom & Playtable & PlaytableP & Recycle & Shelves & Teddy & Vintage \\
\midrule

BLMT-Stereo         & 6.44      & \cellcolor{top1}$2.09^{1}$    & \cellcolor{top5}$18.08^{4}$   & \cellcolor{top3}$4.90^{3}$    & \cellcolor{top1}$6.31^{1}$    & \cellcolor{top1}$6.30^{1}$    & \cellcolor{top1}$2.93^{1}$    & \cellcolor{top1}$3.01^{1}$    & \cellcolor{top1}$8.36^{1}$    & \cellcolor{top1}$3.81^{1}$    & \cellcolor{top1}$2.60^{1}$    & \cellcolor{top1}$2.98^{1}$    & \cellcolor{top1}$1.86^{1}$    & \cellcolor{top1}$4.40^{1}$    & \cellcolor{top3}$25.92^{2}$   & \cellcolor{top1}$3.08^{1}$ \\
FoundationStereo    & 6.75      & \cellcolor{top3}$2.24^{2}$    & \cellcolor{top1}$15.92^{1}$   & \cellcolor{top3}$4.84^{2}$    & \cellcolor{top5}$7.88^{4}$    & \cellcolor{top3}$7.84^{3}$    & \cellcolor{top3}$3.70^{3}$    & \cellcolor{top3}$3.97^{3}$    & \cellcolor{top3}$9.04^{3}$    & \cellcolor{top3}$4.91^{3}$    & \cellcolor{top3}$3.09^{3}$    & \cellcolor{top5}$3.52^{4}$    & \cellcolor{top5}$2.74^{5}$    & \cellcolor{top3}$5.03^{3}$    & \cellcolor{top1}$23.23^{1}$   & \cellcolor{top3}$3.37^{3}$ \\
LG-Stereo           & 6.78      & \cellcolor{top1}$2.09^{1}$    & \cellcolor{top3}$17.97^{3}$   & \cellcolor{top5}$4.91^{4}$    & \cellcolor{top3}$7.00^{2}$    & \cellcolor{top3}$7.05^{2}$    & \cellcolor{top3}$3.36^{2}$    & \cellcolor{top3}$3.68^{2}$    & \cellcolor{top3}$8.46^{2}$    & \cellcolor{top3}$4.81^{2}$    & \cellcolor{top3}$2.92^{2}$    & \cellcolor{top3}$3.24^{2}$    & \cellcolor{top3}$2.05^{2}$    & \cellcolor{top3}$4.90^{2}$    & \cellcolor{top3}$25.98^{3}$   & \cellcolor{top3}$3.24^{2}$ \\
MonoStereo          & 7.29      & \cellcolor{top3}$2.32^{3}$    & \cellcolor{top3}$17.75^{2}$   & \cellcolor{top1}$4.60^{1}$    & \cellcolor{top3}$7.86^{3}$    & \cellcolor{top5}$8.32^{4}$    & \cellcolor{top5}$3.88^{4}$    & \cellcolor{top5}$4.63^{4}$    & \cellcolor{top5}$9.92^{4}$    & \cellcolor{top5}$5.24^{4}$    & \cellcolor{top5}$3.14^{4}$    & \cellcolor{top3}$3.36^{3}$    & \cellcolor{top3}$2.36^{3}$    & \cellcolor{top5}$6.33^{4}$    & \cellcolor{top5}$26.02^{4}$   & \cellcolor{top5}$3.64^{4}$ \\
Monster++           & 8.13      & \cellcolor{top5}$2.70^{4}$    & \cellcolor{top5}$18.26^{5}$   & \cellcolor{top5}$5.05^{5}$    & 8.62          & \cellcolor{top5}$8.38^{5}$    & 4.81          & \cellcolor{top5}$5.86^{5}$    & 11.43         & \cellcolor{top5}$6.04^{5}$    & 4.05          & \cellcolor{top5}$4.00^{5}$    & 2.76          & 9.57          & \cellcolor{top5}$26.26^{5}$   & 4.19 \\
DEFOM-Stereo        & 8.18      & \cellcolor{top5}$2.77^{5}$    & 19.18         & 6.82          & \cellcolor{top5}$8.56^{5}$    & 8.70          & 4.71          & 6.18          & \cellcolor{top5}$11.31^{5}$   & 6.53          & \cellcolor{top5}$3.90^{5}$    & 4.15          & \cellcolor{top5}$2.68^{4}$    & \cellcolor{top5}$7.16^{5}$    & 26.43         & \cellcolor{top5}$3.66^{5}$ \\
DepthFocus          & 8.93      & 4.24          & 19.38         & 10.49         & 8.79          & 8.83          & \cellcolor{top5}$4.61^{5}$    & 7.85          & 12.05         & 6.97          & 4.01          & 4.17          & 3.43          & 7.35          & 27.34         & 4.39 \\
DLNR                & 9.05      & 3.50          & 19.76         & 7.20          & 9.40          & 9.50          & 6.22          & 8.88          & 11.60         & 6.92          & 4.20          & 4.47          & 3.36          & 8.68          & 26.65         & 5.42 \\
S2M2                & 9.82      & 5.51          & 19.50         & 6.40          & 10.63         & 10.59         & 5.99          & 6.07          & 15.46         & 8.02          & 5.82          & 5.94          & 4.58          & 8.54          & 28.00         & 6.20 \\
RAFT-Stereo         & 11.11     & 4.94          & 21.57         & 9.57          & 11.44         & 11.55         & 7.77          & 11.84         & 14.67         & 8.45          & 5.90          & 6.16          & 4.38          & 12.06         & 27.08         & 9.33 \\
CREStereo           & 11.61     & 5.69          & 20.46         & 10.30         & 12.55         & 12.52         & 9.73          & 12.94         & 15.87         & 9.78          & 7.80          & 7.84          & 5.19          & 10.17         & 26.95         & 6.43 \\
AIO-Stereo          & 16.65     & 10.03         & 22.05         & 12.28         & 15.49         & 15.55         & 14.60         & 16.45         & 23.45         & 13.21         & 14.20         & 14.81         & 10.46         & 17.65         & 28.48         & 20.99 \\
Selective-IGEV      & 17.90     & 10.80         & 22.62         & 13.61         & 17.23         & 17.28         & 16.88         & 18.70         & 24.09         & 14.31         & 15.87         & 16.44         & 11.57         & 18.15         & 28.98         & 22.00 \\

\midrule
DepthPro & 19.86 & 10.74 & 42.77 & 31.17 & 23.37 & 23.37 & 11.89 & 11.89 & 31.91 & 18.42 & 8.87 & 9.17 & 9.62 & 14.88 & 42.35 & 7.46 \\

\bottomrule
\end{tabular}
} 
\caption{This table presents techniques listed in ascending Normal Average Error (NormAvg) order for the 15 Middlebury training images. Superscripts indicate the lowest 1-5th Normal Error (NormalsErr) per technique for each one of the 15 Middlebury images. Darker cell shading highlights lower NormalsErr, indicating the technique is among the \colorbox{top1}{Top 1}, \colorbox{top3}{Top 3}, or \colorbox{top5}{Top 5} approaches with the lowest NormalsErr. Notably, the top-performing methods are from the Group A, and predict surface normals that are closely aligned with the ground truth.}
\label{tab:middlebury_benchmark_normals}
\end{table*}

\begin{figure*}
    \begin{center}
    \includegraphics[width=\textwidth]{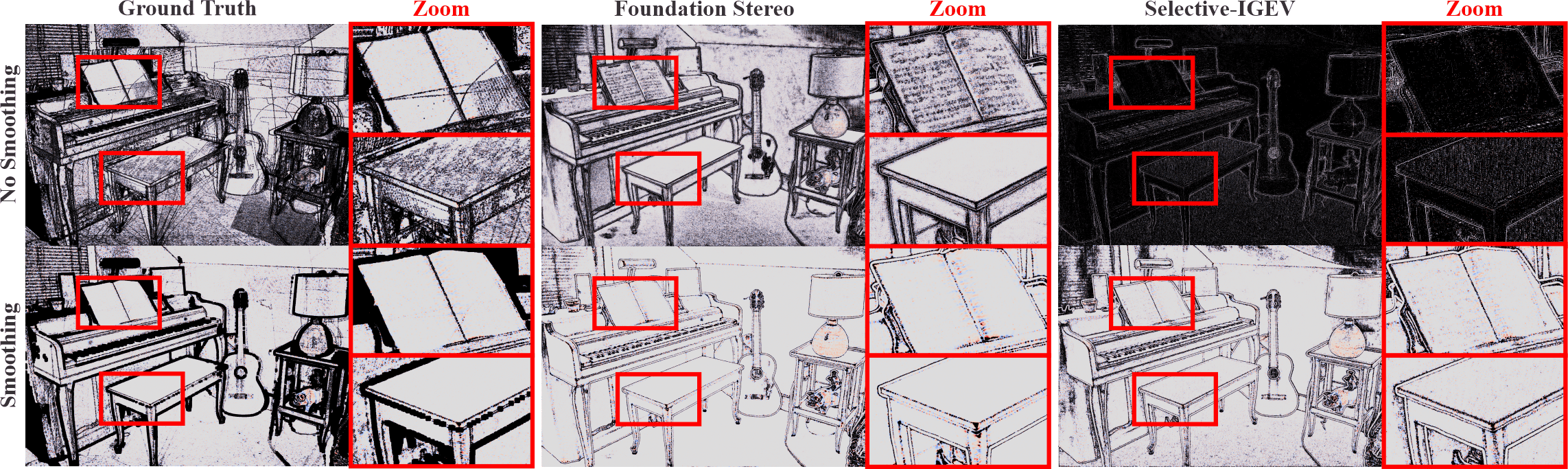}
    \end{center}
        \caption{\textbf{GT x SOTA approaches:} a point-wise analysis of curvature for "Piano" image. Black coordinates represent values of $|K| > 1{,}000m^{-2}$. For the GT, black coordinates also represent NaN values, which are measurement inconsistencies during Middlebury disparity estimation. In the second row, we applied smoothing with $\sigma = 2\, m$ in the 3D point cloud before computing the GC.
}
    \label{fig:middlebury_dataset_piano}
\end{figure*}

\begin{figure*}[!ht]
    \begin{center}
    \includegraphics[width=\textwidth]{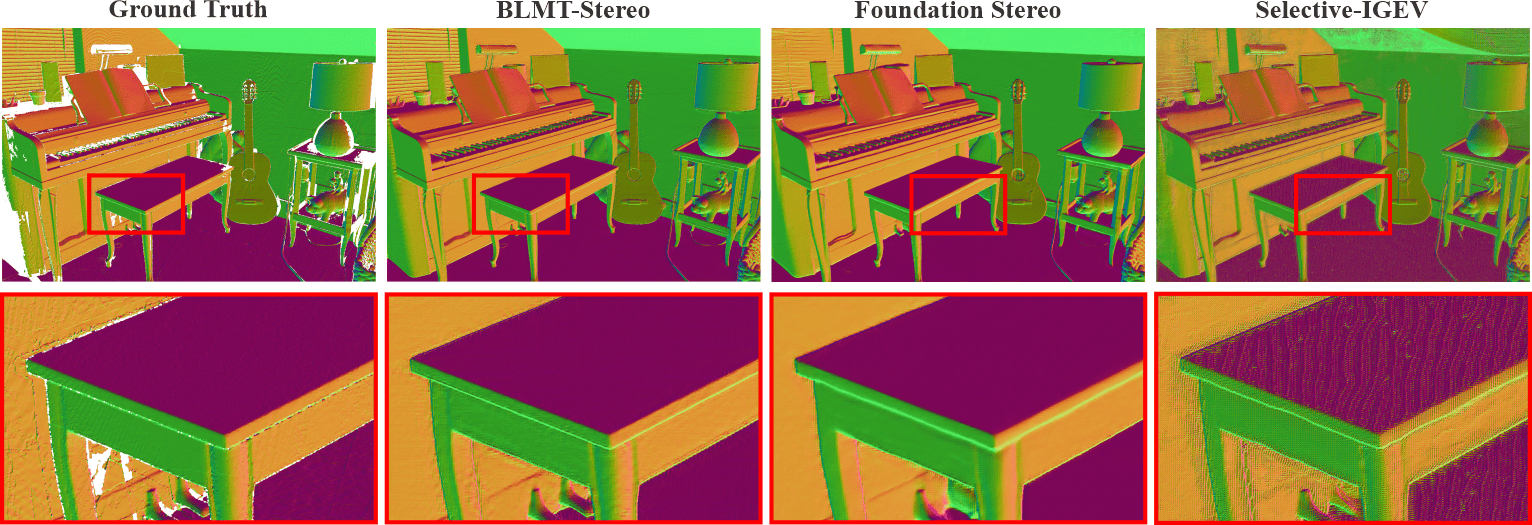}
    \end{center}
        \caption{\textbf{Normals Analysis:} Qualitative comparison on normals reconstruction for Piano data.
        }
    \label{fig:normals_piano}
\end{figure*}

\begin{figure*}[!ht]
    \begin{center}
    \includegraphics[width=\textwidth]{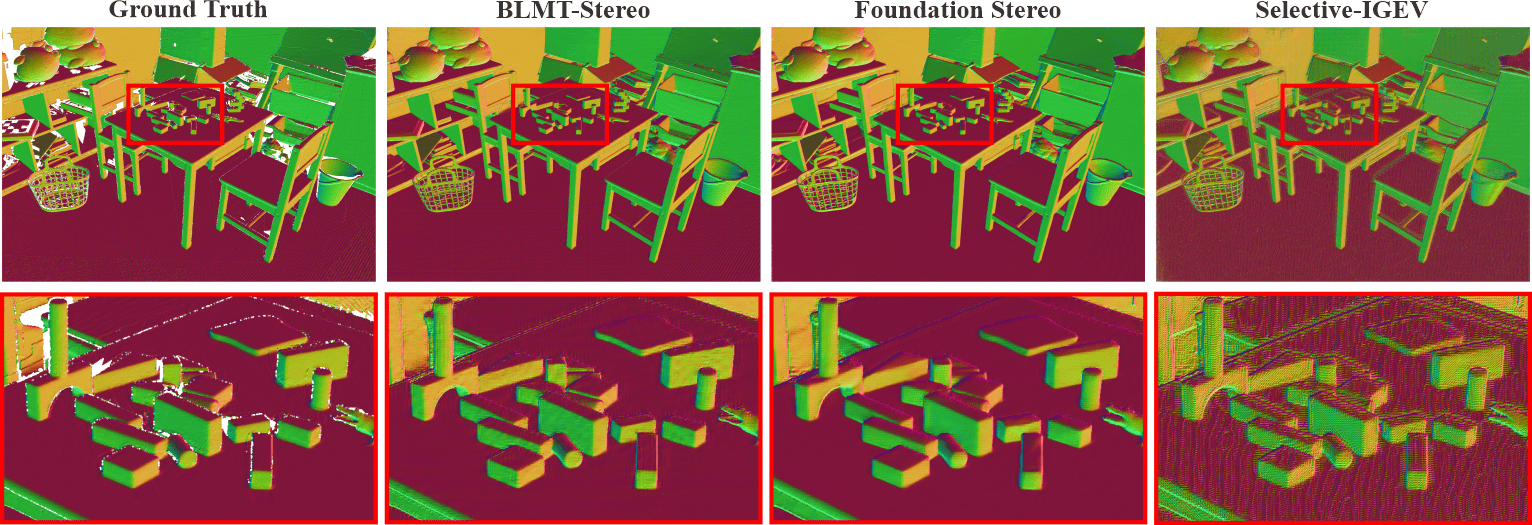}
    \end{center}
        \caption{\textbf{Normals Analysis:} Qualitative comparison on normals reconstruction for Playtable data.
        }
    \label{fig:normals_playtable}
\end{figure*}

\begin{figure}[!ht]
    \begin{center}
    \includegraphics[width=\columnwidth]{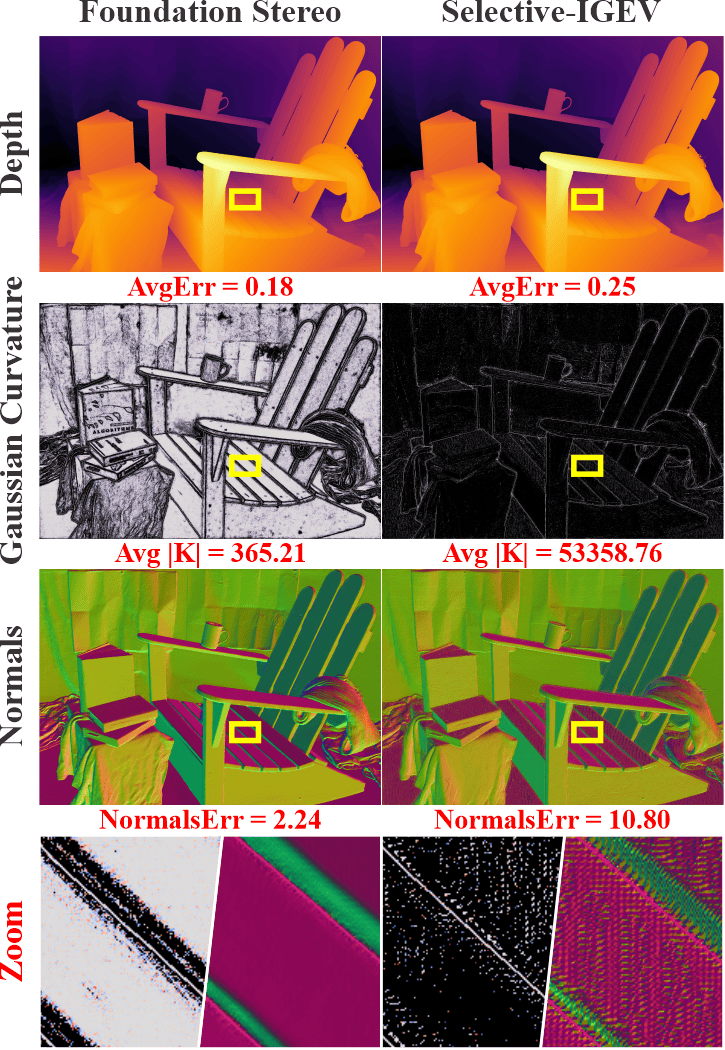}
    \end{center}
        \caption{\textbf{Quantitative vs. Qualitative comparison:} We present the average disparity error (AvgError), average absolute Gaussian curvature (Avg$|K|$), and normal error (NormalsErr) for the Foundation-Stereo and Selective-IGEV techniques on the Adirondack data. We also include a zoomed-in region to illustrate the pointwise consistency of Gaussian curvature and surface normals. Note that despite its low AvgError, the reconstruction produced by Selective-IGEV lacks smoothness.
        }
    \label{fig:quant_quali_comparison}
\end{figure}

In~\autoref{fig:quant_quali_comparison}, we also present additional qualitative and quantitative results for the Adirondack data. Although both Foundation-Stereo and Selective-IGEV achieve accurate depth reconstruction (low AvgError), their Gaussian curvature and normal consistency differ substantially. Foundation-Stereo exhibits lower Avg$|K|$ and lower NormalsErr, which may account for its overall better performance.

Our experiments demonstrate that the new SOTA approaches (Group A) are able to estimate more concise structures during disparity/depth reconstruction, which means the global structure of a surface is kept. Unlike new SOTA methods, Group B techniques do not preserve global surface structures, despite high accuracy in AvgError, which suggests that unifying stereo and monocular features is crucial to preserve 3D data relationships.

\subsection{3D Synthetic Scenes} \label{subsec:synthetic_dataset}

In our 3D synthetic scenes, the data were designed to make the GC analysis easier by using simple objects with well-known curvatures. We ensured that planar surfaces -- such as boxes, the ground, and walls -- have $K = 0$ in all scenes by generating the GT depth map through orthogonal projection of each surface point onto the left camera’s image plane, with a (2000, 3000)px resolution.

For the following experiments, we obtained the source code for: FoundationStereo, RAFT-Stereo, and Selective-IGEV, representing the aforementioned stereo approaches; and for DepthAnythingV2~\cite{yang2024depthv2} and DepthPro~\cite{bochkovskii2024depth}, representing the SOTA in Monocular Depth Estimation (MDE).

\autoref{fig:synthetic_dataset_avgerr_x_curvature_analysis} presents a quantitative analysis of GC and average depth error across the five scenes in the 3D synthetic scenes. The left-most bar in each graph represents the curvature of the GT, while the other bars show the curvature estimated by stereo and monocular methods. The right-most plot for each scene shows the average depth error (AvgError) in centimeters. Notably, FoundationStereo, which achieves the lowest average error -- less than one centimeter in all scenes -- also exhibits the lowest GC. See the depth and curvature results for each one of them in~\autoref{fig:sota_stereo_GC}.

\begin{figure}[!ht]
    \begin{center}
    \includegraphics[width=0.90\columnwidth]{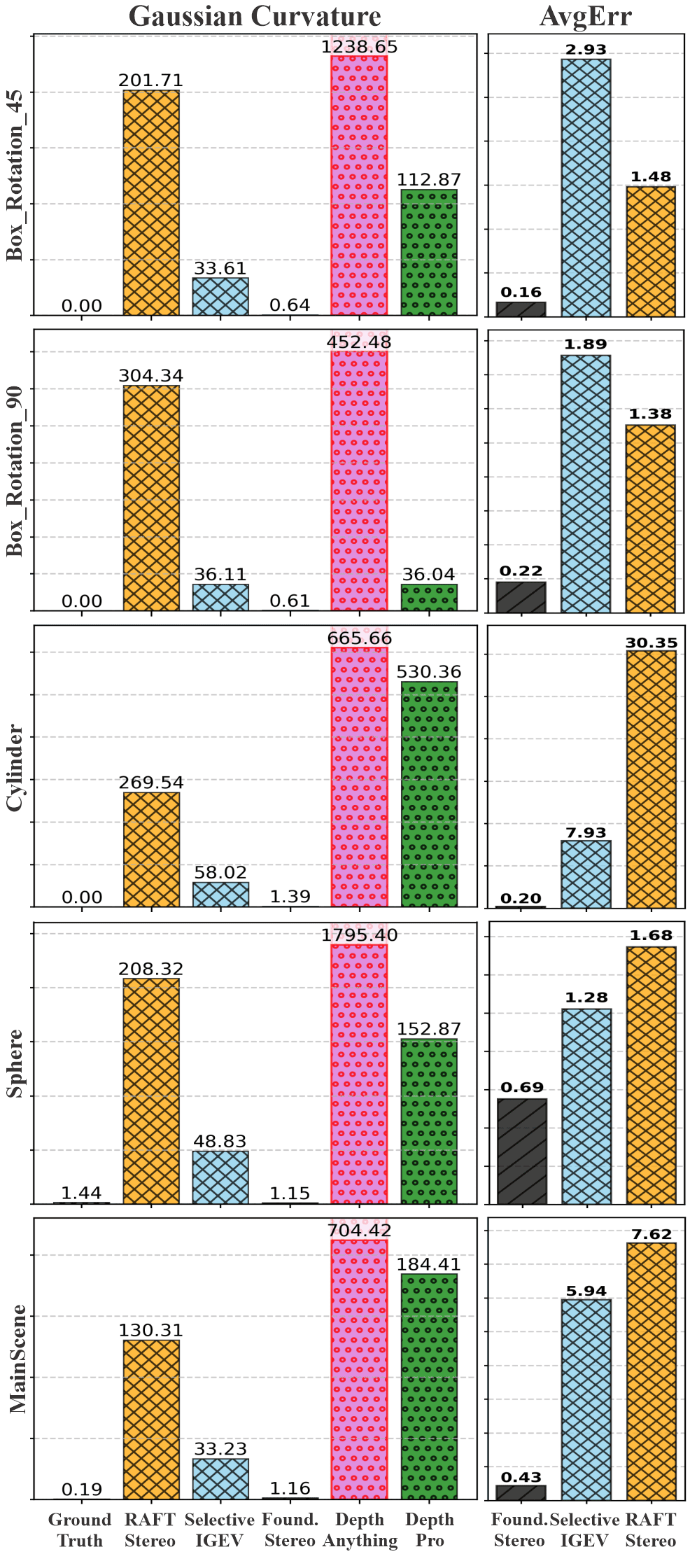}
    \end{center}
        \caption{This figure presents an quantitative analysis on the average $|K|$ values of GC (left) and the depth (cm) AvgError performance (right). Notice that stereo approaches with the lowest GC values achieve the lowest AvgError. We do not measure AvgError for monocular approaches since they usually provide relative depth.}
    \label{fig:synthetic_dataset_avgerr_x_curvature_analysis}
\end{figure}

\begin{figure}[!ht]
    \begin{center}
    \includegraphics[width=1.0\columnwidth]{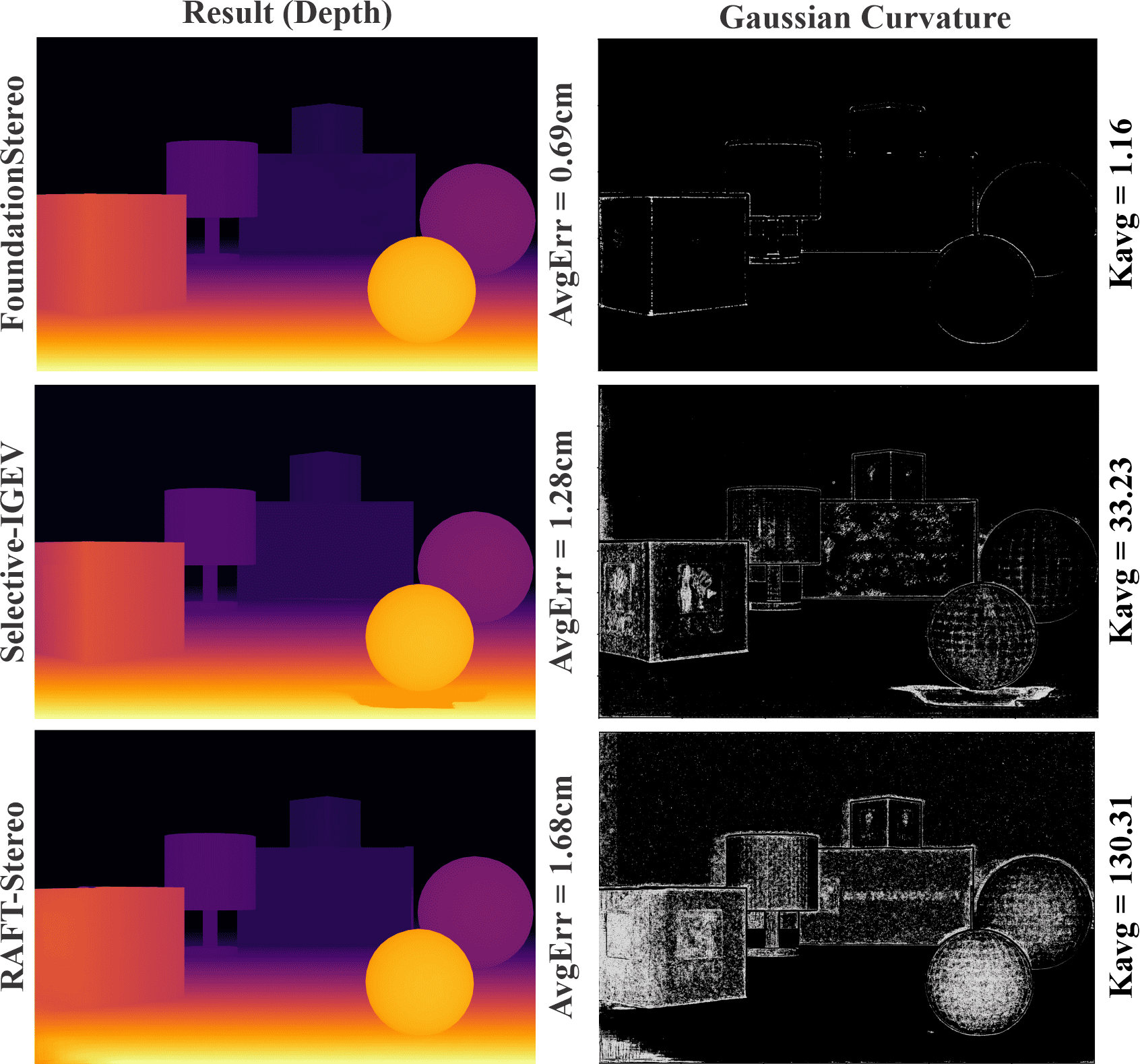}
    \end{center}
        \caption{This figure visually demonstrates the superiority of FoundationStereo in reconstructing the 3D scene with depth error inferior to 1cm in average, and low GC, preserving 3D geometry more consistently than other stereo techniques.}
    \label{fig:sota_stereo_GC}
\end{figure}

We do not report average error for DepthAnythingV2 and DepthPro, as these methods provide only relative depth. Although DepthPro estimates depth in meters and predicts the focal length from an image, our investigation revealed that its depth predictions face real-scale limitations, which is expected for monocular methods. However, to the best of our knowledge, DepthPro provides the most detailed surface estimation among current MDE approaches. While DepthPro tends to minimize GC on the Middlebury dataset (see Figures~\ref{fig:middlebury_dataset_avgerr_x_curvature_analysis} and~\ref{fig:middlebury_dataset_curvature_distribution}), both DepthPro and DepthAnythingV2 reconstruct scenes with higher curvatures in our 3D synthetic scenes (see~\autoref{fig:synthetic_dataset_avgerr_x_curvature_analysis}). To visualize the results,~\autoref{fig:sota_mono_GC} shows the 3D surface and GC estimated from SOTA MDE approaches, like DepthPro and DepthAnythingV2.

\begin{figure}
    \begin{center}
    \includegraphics[width=1.0\columnwidth]{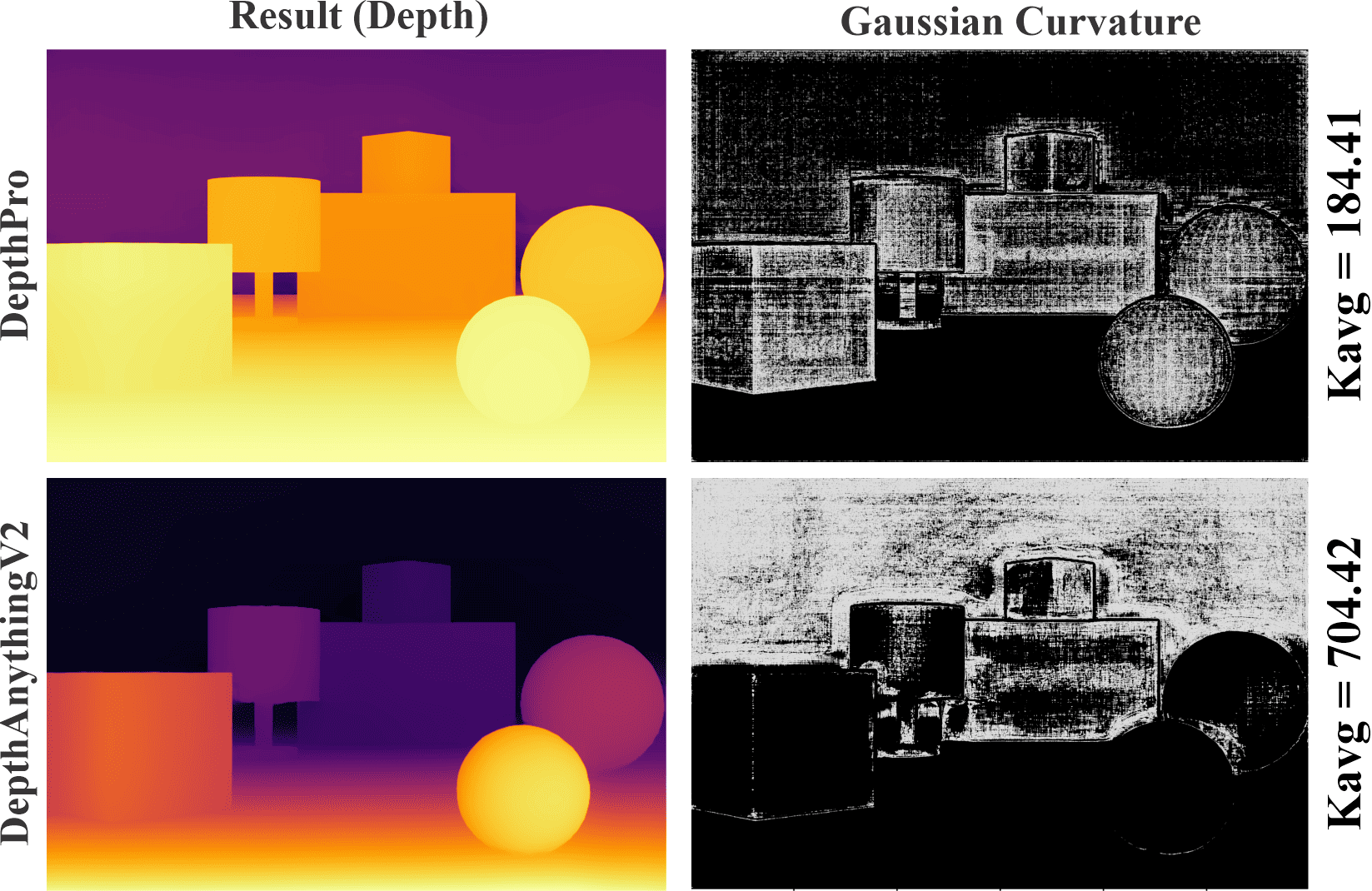}
    \end{center}
        \caption{This figure shows the 3D surfaces obtained from the SOTA MDE approaches, DepthPro and DepthAnythingV2, for the left view of the Main Scene. Observe that meaningful 3D patterns are extracted, however depth are relative and curvature presents minor inconsistencies.}
    \label{fig:sota_mono_GC}
\end{figure}
\section{Limitations and Future Directions}
\label{sec:limitations_and_future_direction}
This paper highlights Gaussian curvature as a powerful measure for analyzing 3D scenes, given its invariance to viewpoint changes and sparsity in man-made environments. However, our analysis is limited to the 15 training images from the Middlebury dataset, as it is the only benchmark that publicly provides the techniques’ results. Although we did not directly evaluate other benchmarks, the strong performance of BLMT-Stereo as the top method on Middlebury~\cite{scharstein2014high} and ETH3D~\cite{schoeps2017cvpr}, and of MonSter++ as the leading method on KITTI 2012~\cite{Geiger2012CVPR} and KITTI 2015~\cite{Menze2015CVPR}, suggests that our findings may generalize to outdoor scenes. As future work, we plan to extend our investigation to additional datasets.

In this paper, we focused on \textit{analyzing} and \textit{understanding} the role of Gaussian curvature in 3D reconstruction using SOTA stereo-vision techniques. We did not aim to develop a new algorithm at this stage. As future work, we plan to incorporate low absolute Gaussian curvature as an unsupervised metric for stereo 3D reconstruction, which also includes treating depth discontinuities, occlusions, repetitive patterns, and textureless regions.

\section{Conclusion}
\label{sec:conclusion}
Gaussian curvature (GC) plays a significant role in the reconstruction of 3D surfaces. We showed that the histogram of GC reveals a sparse distribution, suggesting that GC is a compact descriptor of surface geometry. Furthermore, we proposed that the square root of the GC magnitude can serve as a sparse loss function consistent with the observed normalized histogram distribution. In addition, we introduced a simple and efficient metric, termed Low Gaussian Curvature (LGC), which can be used as a proxy for the inverse of the Shannon entropy of the normalized histogram distribution, as higher LGC imply low entropy for the histogram.  

Our empirical evaluation of state-of-the-art methods on the Middlebury benchmark revealed that the GC normalized histograms generated by these methods approximate the high LGC values of the ground truth GC distribution both on the Middlebury dataset and on our 3D synthetic scenes. This observation suggests that very likely state-of-the-art methods do incorporate a prior that minimizes GC values in regions where data matching is not sufficient to infer a solution. However, one does not know where in the algorithm such a prior occurs nor, in analogy to modules that perform feature extraction,  how to extract a module with such a prior to be used in other applications.  

In summary, our work enhances the \textit{interpretability} and \textit{understanding} of 3D vision by highlighting Gaussian Curvature as an intrinsic geometric prior for indoor 3D surfaces. Grounded in modern deep learning data, our approach underscores the importance of 3D geometric modeling in capturing critical visual information and can guide the development of next-generation vision systems.

As a possible immediate consequence of this study, LGC could be used as a quality measure in multiple 3D reconstruction modalities, including stereo-vision, monocular depth estimation, and, by inference, structure from motion.

\section*{Acknowledgements}
This study was financed in part by the Coordenação de Aperfeiçoamento de Pessoal de Nível Superior – Brasil (CAPES) – Finance Code 001.
{
    \small
    \bibliographystyle{ieeenat_fullname}
    \bibliography{main}

\begin{thebibliography}{33}
\providecommand{\natexlab}[1]{#1}
\providecommand{\url}[1]{\texttt{#1}}
\expandafter\ifx\csname urlstyle\endcsname\relax
  \providecommand{\doi}[1]{doi: #1}\else
  \providecommand{\doi}{doi: \begingroup \urlstyle{rm}\Url}\fi

\bibitem[Basri and Jacobs(2001)]{BasriJacobs2001}
R. Basri and D. Jacobs.
\newblock Lambertian reflectance and linear subspaces.
\newblock In \emph{In Proceedings Eighth IEEE International Conference on Computer Vision. ICCV 2001}, pages 383--390 vol.2, 2001.

\bibitem[Bochkovskii et~al.(2024)Bochkovskii, Delaunoy, Germain, Santos, Zhou, Richter, and Koltun]{bochkovskii2024depth}
Aleksei Bochkovskii, Ama{\"e}l Delaunoy, Hugo Germain, Marcel Santos, Yichao Zhou, Stephan~R Richter, and Vladlen Koltun.
\newblock Depth pro: Sharp monocular metric depth in less than a second.
\newblock \emph{arXiv preprint arXiv:2410.02073}, 2024.

\bibitem[Cheng et~al.(2025{\natexlab{a}})Cheng, Liao, Cai, Liu, Xu, Wang, Wang, Yuan, Deng, Zang, Shi, Tang, and Yang]{cheng2025monster++}
Junda Cheng, Wenjing Liao, Zhipeng Cai, Longliang Liu, Gangwei Xu, Xianqi Wang, Yuzhou Wang, Zikang Yuan, Yong Deng, Jinliang Zang, Yangyang Shi, Jinhui Tang, and Xin Yang.
\newblock Monster++: Unified stereo matching, multi-view stereo, and real-time stereo with monodepth priors, 2025{\natexlab{a}}.

\bibitem[Cheng et~al.(2025{\natexlab{b}})Cheng, Liu, Xu, Wang, Zhang, Deng, Zang, Chen, Cai, and Yang]{cheng2025monster}
Junda Cheng, Longliang Liu, Gangwei Xu, Xianqi Wang, Zhaoxing Zhang, Yong Deng, Jinliang Zang, Yurui Chen, Zhipeng Cai, and Xin Yang.
\newblock Monster: Marry monodepth to stereo unleashes power.
\newblock In \emph{Proceedings of the IEEE/CVF conference on computer vision and pattern recognition}, 2025{\natexlab{b}}.

\bibitem[D.(1994)]{Field94}
Field D.
\newblock What is the goal of sensory coding.
\newblock \emph{Neural Comp.}, 6:\penalty0 559--601, 1994.

\bibitem[do~Carmo(2016)]{do2016differential}
M.P. do Carmo.
\newblock \emph{Differential Geometry of Curves and Surfaces: Revised and Updated Second Edition}.
\newblock Dover Publications, 2016.

\bibitem[Dong et~al.(2024)Dong, Xu, Wang, Chen, Xin, Jia, Wang, and Tu]{NeurCADRecon2024}
Qiujie Dong, Rui Xu, Pengfei Wang, Shuangmin Chen, Shiqing Xin, Xiaohong Jia, Wenping Wang, and Changhe Tu.
\newblock Neurcadrecon: Neural representation for reconstructing cad surfaces by enforcing zero gaussian curvature.
\newblock \emph{ACM Trans. Graph.}, 43\penalty0 (4), 2024.

\bibitem[Epstein et~al.(1995)Epstein, Hallinan, and Yuille]{Yuille95}
R. Epstein, P.W. Hallinan, and A.L. Yuille.
\newblock 5 plus or minus 2 eigenimages suffice: An empirical investigation of low-dimensional lighting models.
\newblock In \emph{In Proceedings of IEEE Workshop on Physics-Based Modeling in Computer Vision}, pages 108--116, 1995.

\bibitem[Geiger et~al.(2012)Geiger, Lenz, and Urtasun]{Geiger2012CVPR}
Andreas Geiger, Philip Lenz, and Raquel Urtasun.
\newblock Are we ready for autonomous driving? the kitti vision benchmark suite.
\newblock In \emph{Conference on Computer Vision and Pattern Recognition (CVPR)}, 2012.

\bibitem[Geiger et~al.(1995)Geiger, Ladendorf, and Yuille]{Geigeretal95}
D. Geiger, B. Ladendorf, and A.~L. Yuille.
\newblock Occlusions and binocular stereo.
\newblock \emph{International Journal of Computer Vision}, 14\penalty0 (3):\penalty0 211--226, 1995.

\bibitem[Guo(2010)]{guo20103d}
Kehua Guo.
\newblock 3d shape representation using gaussian curvature co-occurrence matrix.
\newblock In \emph{Artificial Intelligence and Computational Intelligence: International Conference, AICI 2010, Sanya, China, October 23-24, 2010, Proceedings, Part I 2}, pages 373--380. Springer, 2010.

\bibitem[Hardy et~al.(1952)Hardy, Littlewood, and Pólya]{Hardy52}
G.~H. Hardy, J.~E. Littlewood, and G. Pólya.
\newblock \emph{Inequalities}.
\newblock Cambridge University Press, Cambridge, UK, 1952.

\bibitem[Ishikawa and Geiger(2006)]{ishikawa2006illusory}
Hiroshi Ishikawa and Davi Geiger.
\newblock Illusory volumes in human stereo perception.
\newblock \emph{Vision research}, 46\penalty0 (1-2):\penalty0 171--178, 2006.

\bibitem[Jiang et~al.(2025)Jiang, Lou, Ding, Xu, Tan, Jiang, and Huang]{jiang2025defom}
Hualie Jiang, Zhiqiang Lou, Laiyan Ding, Rui Xu, Minglang Tan, Wenjie Jiang, and Rui Huang.
\newblock Defom-stereo: Depth foundation model based stereo matching.
\newblock In \emph{Proceedings of the IEEE/CVF conference on computer vision and pattern recognition}, 2025.

\bibitem[Koenderink(1990)]{Koenderink90}
Jan~J. Koenderink.
\newblock \emph{Solid shape}.
\newblock MIT Press, Cambridge, MA, USA, 1990.

\bibitem[Li et~al.(2022)Li, Wang, Xiong, Cai, Yan, Yang, Liu, Fan, and Liu]{li2022practical}
Jiankun Li, Peisen Wang, Pengfei Xiong, Tao Cai, Ziwei Yan, Lei Yang, Jiangyu Liu, Haoqiang Fan, and Shuaicheng Liu.
\newblock Practical stereo matching via cascaded recurrent network with adaptive correlation.
\newblock In \emph{Proceedings of the IEEE/CVF conference on computer vision and pattern recognition}, pages 16263--16272, 2022.

\bibitem[Lipson et~al.(2021)Lipson, Teed, and Deng]{lipson2021raft}
Lahav Lipson, Zachary Teed, and Jia Deng.
\newblock Raft-stereo: Multilevel recurrent field transforms for stereo matching.
\newblock In \emph{2021 International Conference on 3D Vision (3DV)}, pages 218--227. IEEE, 2021.

\bibitem[Mairal et~al.(2009)Mairal, Bach, Ponce, Sapiro, and Zisserman]{Mairal2009}
J. Mairal, F. Bach, J. Ponce, G. Sapiro, and A. Zisserman.
\newblock Non-local sparse models for image restoration.
\newblock In \emph{In Proceedings IEEE International Conference on Computer Vision. ICCV 2009}, 2009.

\bibitem[Menze and Geiger(2015)]{Menze2015CVPR}
Moritz Menze and Andreas Geiger.
\newblock Object scene flow for autonomous vehicles.
\newblock In \emph{Conference on Computer Vision and Pattern Recognition (CVPR)}, 2015.

\bibitem[Min et~al.(2025)Min, Jeon, Kim, and Choi]{min2025s2m2}
Junhong Min, Youngpil Jeon, Jimin Kim, and Minyong Choi.
\newblock S2m2: Scalable stereo matching model for reliable depth estimation.
\newblock In \emph{Proceedings of the IEEE/CVF International Conference on Computer Vision}, pages 26729--26739, 2025.

\bibitem[Olshausen and Field(1997)]{OlshField97}
B.~A. Olshausen and D.~J. Field.
\newblock Sparse coding with an overcomplete basis set:a strategy employed by v1?
\newblock \emph{Vis. Res.}, 37:\penalty0 3311--3325, 1997.

\bibitem[Scharstein et~al.(2014)Scharstein, Hirschm{\"u}ller, Kitajima, Krathwohl, Ne{\v{s}}i{\'c}, Wang, and Westling]{scharstein2014high}
Daniel Scharstein, Heiko Hirschm{\"u}ller, York Kitajima, Greg Krathwohl, Nera Ne{\v{s}}i{\'c}, Xi Wang, and Porter Westling.
\newblock High-resolution stereo datasets with subpixel-accurate ground truth.
\newblock In \emph{Pattern Recognition: 36th German Conference, GCPR 2014, M{\"u}nster, Germany, September 2-5, 2014, Proceedings 36}, pages 31--42. Springer, 2014.

\bibitem[Sch\"ops et~al.(2017)Sch\"ops, Sch\"onberger, Galliani, Sattler, Schindler, Pollefeys, and Geiger]{schoeps2017cvpr}
Thomas Sch\"ops, Johannes~L. Sch\"onberger, Silvano Galliani, Torsten Sattler, Konrad Schindler, Marc Pollefeys, and Andreas Geiger.
\newblock A multi-view stereo benchmark with high-resolution images and multi-camera videos.
\newblock In \emph{Conference on Computer Vision and Pattern Recognition (CVPR)}, 2017.

\bibitem[Ververas et~al.(2024)Ververas, Potamias, Song, Deng, and Zafeiriou]{ververas2024sags}
Evangelos Ververas, Rolandos~Alexandros Potamias, Jifei Song, Jiankang Deng, and Stefanos Zafeiriou.
\newblock Sags: structure-aware 3d gaussian splatting.
\newblock In \emph{European Conference on Computer Vision}, pages 221--238. Springer, 2024.

\bibitem[Wang et~al.(2024{\natexlab{a}})Wang, Leroy, Cabon, Chidlovskii, and Revaud]{wang2024dust3r}
Shuzhe Wang, Vincent Leroy, Yohann Cabon, Boris Chidlovskii, and Jerome Revaud.
\newblock Dust3r: Geometric 3d vision made easy.
\newblock In \emph{Proceedings of the IEEE/CVF Conference on Computer Vision and Pattern Recognition}, pages 20697--20709, 2024{\natexlab{a}}.

\bibitem[Wang et~al.(2024{\natexlab{b}})Wang, Xu, Jia, and Yang]{wang2024selective}
Xianqi Wang, Gangwei Xu, Hao Jia, and Xin Yang.
\newblock Selective-stereo: Adaptive frequency information selection for stereo matching.
\newblock In \emph{Proceedings of the IEEE/CVF Conference on Computer Vision and Pattern Recognition}, pages 19701--19710, 2024{\natexlab{b}}.

\bibitem[Wen et~al.(2025)Wen, Trepte, Aribido, Kautz, Gallo, and Birchfield]{wen2025foundationstereo}
Bowen Wen, Matthew Trepte, Joseph Aribido, Jan Kautz, Orazio Gallo, and Stan Birchfield.
\newblock Foundationstereo: Zero-shot stereo matching.
\newblock In \emph{Proceedings of the IEEE/CVF conference on computer vision and pattern recognition}, 2025.

\bibitem[Yang et~al.(2024)Yang, Kang, Huang, Zhao, Xu, Feng, and Zhao]{yang2024depthv2}
Lihe Yang, Bingyi Kang, Zilong Huang, Zhen Zhao, Xiaogang Xu, Jiashi Feng, and Hengshuang Zhao.
\newblock Depth anything v2.
\newblock \emph{arXiv preprint arXiv:2406.09414}, 2024.

\bibitem[Zhao et~al.(2023)Zhao, Zhou, Zhang, Chen, Yang, and Zhao]{zhao2023high}
Haoliang Zhao, Huizhou Zhou, Yongjun Zhang, Jie Chen, Yitong Yang, and Yong Zhao.
\newblock High-frequency stereo matching network.
\newblock In \emph{Proceedings of the IEEE/CVF conference on computer vision and pattern recognition}, pages 1327--1336, 2023.

\bibitem[Zhong and Qin(2014)]{zhong2014sparse}
Ming Zhong and Hong Qin.
\newblock Sparse approximation of 3d shapes via spectral graph wavelets.
\newblock \emph{The Visual Computer}, 30:\penalty0 751--761, 2014.

\bibitem[Zhou et~al.(2025)Zhou, Zhang, Yuan, Ye, Chen, Jiang, Chen, and Zhang]{zhou2025all}
Jingyi Zhou, Haoyu Zhang, Jiakang Yuan, Peng Ye, Tao Chen, Hao Jiang, Meiya Chen, and Yangyang Zhang.
\newblock All-in-one: Transferring vision foundation models into stereo matching.
\newblock In \emph{Proceedings of the AAAI Conference on Artificial Intelligence}, pages 10797--10805, 2025.

\bibitem[Zhou et~al.(2024)Zhou, Wu, Zuo, Chen, and Hu]{zhou2024comprehensive}
Linglong Zhou, Guoxin Wu, Yunbo Zuo, Xuanyu Chen, and Hongle Hu.
\newblock A comprehensive review of vision-based 3d reconstruction methods.
\newblock \emph{Sensors}, 24\penalty0 (7):\penalty0 2314, 2024.

\bibitem[Zhu et~al.(1998)Zhu, Nian, and Mumford]{ZhuMumford98}
S-C. Zhu, W-Y. Nian, and D. Mumford.
\newblock Filters, random fields and maximum entropy (frame): Towards a unified theory for texture modeling.
\newblock \emph{International Journal of Computer Vision}, 27:\penalty0 107--126, 1998.

\end{thebibliography}
}
\clearpage
\setcounter{page}{1}
\maketitlesupplementary

\section{Introduction} \label{supp_sec:intro}

This supplementary material provides a deeper exploration of the concepts, methods, and results introduced in the main paper. While the main text presents a concise overview of our findings, certain theoretical insights and experimental details require further elaboration to fully support our claims and offer transparency in our methodology. This document is intended to complement the main paper by offering readers a more comprehensive understanding of the mathematical foundations and empirical behavior of state-of-the-art (SOTA) techniques with respect to Gaussian Curvature analysis in depth estimation.

In \autoref{supp_sec:GeometricalSupplemental}, we describe the methods used to estimate Gaussian Curvature from discrete depth data and provide a formal derivation linking curvature minimization to the $L^0$ loss. This section is essential for grounding our curvature analysis in a solid mathematical framework. Then, in \autoref{supp_sec:more_experiments}, we expand our evaluation of SOTA techniques by presenting additional experimental results on both the Middlebury and our 3D synthetic scenes. These extended results not only reinforce the findings of the main paper but also reveal deeper patterns and behaviors that are critical for interpreting the performance of modern depth estimation methods.

\section{Geometrical Supplemental Material} \label{supp_sec:GeometricalSupplemental}

The material below are known geometrical properties (see \cite{Hardy52,do2016differential}) that we present here to "refresh" the interested reviewer, just in case. First we describe the two fundamental forms used to compute the Gaussian curvature (GC) and then we prove the sparsity result associated with the GC measure. 

\subsection{Methods to estimate Gaussian curvature from data}

For completion, we expand here the Gaussian curvature formula:

\begin{align}
    K={\frac {\det(\mathrm {I\!I} )}{\det(\mathrm {I} )}}={\frac {LN-M^{2}}{EG-F^{2}}}\, . 
\end{align}
where the fundamental forms I and II can be obtained as follows  

\paragraph{The first fundamental form I:} Let $\mathbf{Z}(u, v)$ be a parametric surface and let $\mathbf{Z}_u(u, v),\mathbf{Z}_v(u, v)$ denote the partial derivatives that are independent tangent vectors to the surface. Then the inner product of two tangent vectors is
\begin{align} 
&\mathrm {I} (a \mathbf{Z}_{u}+b\mathbf{Z}_{v},c\mathbf{Z}_{u}+d\mathbf{Z}_{v}) \\ &=ac\langle \mathbf{Z}_{u},\mathbf{Z}_{u}\rangle +(ad+bc)\langle \mathbf{Z}_{u},\mathbf{Z}_{v}\rangle +bd\langle \mathbf{Z}_{v},\mathbf{Z}_{v}\rangle \nonumber \\
&=\begin{pmatrix}
    a & b
\end{pmatrix}\begin{pmatrix}E&F\\F&G\end{pmatrix}\begin{pmatrix}
    c \\ d
\end{pmatrix}\, ,
\end{align}
where E, F, and G are the coefficients of the first fundamental form. 
\paragraph{The second fundamental form II:}
The vector normal to the surface is given by 
\begin{align} 
\mathbf {n} ={\frac {\mathbf{Z}_u(u, v) \times \mathbf{Z}_v(u, v)}{|\mathbf{Z}_u(u, v) \times \mathbf{Z}_v(u, v)|}}\,.
\end{align}
The second fundamental form is then written as

\begin{align} \mathrm {I\!I} =\begin{pmatrix}
    du & dv
\end{pmatrix}\begin{pmatrix}L&M\\M&N\end{pmatrix}\begin{pmatrix}
    du \\ dv
\end{pmatrix}\,,\end{align}
where 
\begin{align} 
L=\mathbf{Z} _{uu}\cdot \mathbf {n} \,,\quad M=\mathbf{Z} _{uv}\cdot \mathbf {n} \,,\quad N=\mathbf{Z} _{vv}\cdot \mathbf {n} \,.
\end{align}

\subsection{Proof of GC minimization association with $L^0$ Loss}

The result of Equation 6 is known and quite important and so we repeat here in a form of a known theorem $\sqrt{|\kappa_1 \, \kappa_2|} = \lim_{p\rightarrow 0} 
\left( \frac{1}{2} (|\kappa_1|^p + |\kappa_2|^p)\right)^{\frac{1}{p}} $. We prove this theorem by breaking it into three lemmas before the final proof. 

\begin{lemma}
For $r>1$, $\left (\frac{|\kappa_1|^p + |\kappa_2|^p}{2}\right)^{r}\le  \frac{|\kappa_1|^{p\, r} + |\kappa_2|^{p\, r}}{2}$\, ,
 \label{lemma:r}
\end{lemma}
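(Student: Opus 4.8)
The plan is to recognize this inequality as an immediate consequence of the convexity of the power map $t \mapsto t^r$ for $r>1$. First I would set $a = |\kappa_1|^p$ and $b = |\kappa_2|^p$, both nonnegative, so that the asserted inequality reduces to the clean two-variable statement
\begin{align}
\left(\frac{a+b}{2}\right)^{r} \le \frac{a^{r}+b^{r}}{2}, \qquad a,b \ge 0,\ r>1.
\label{eq:reduced-convexity}
\end{align}
This substitution strips away the role of $p$ entirely: once \eqref{eq:reduced-convexity} is established, the lemma follows by plugging back the definitions of $a$ and $b$, with $b^r = |\kappa_2|^{pr}$ and similarly for $a$.

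The key step is then to read \eqref{eq:reduced-convexity} as Jensen's inequality for the convex function $\phi(t) = t^{r}$ evaluated at the two points $a,b$ with equal weights $\tfrac12,\tfrac12$. Concretely, convexity gives $\phi\!\left(\tfrac{a+b}{2}\right) \le \tfrac12\phi(a) + \tfrac12\phi(b)$, which is exactly \eqref{eq:reduced-convexity}. To justify the convexity I would check the second derivative $\phi''(t) = r(r-1)\,t^{r-2}$, which is nonnegative on $(0,\infty)$ precisely because $r>1$ forces $r(r-1)>0$. Hence $\phi$ is convex, and Jensen applies.

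The only genuinely delicate point, and the one I would flag as the main obstacle, is the boundary behavior at $t=0$: when $1<r<2$ the factor $t^{r-2}$ diverges as $t\to 0^{+}$, so the second-derivative argument does not directly certify convexity on the closed half-line $[0,\infty)$. I would resolve this by noting that $\phi$ is continuous on $[0,\infty)$ and convex on the open interval $(0,\infty)$, and that convexity extends to the closure by continuity; equivalently, one can bypass the derivative entirely and verify the chord inequality $\phi\!\left(\tfrac{a+b}{2}\right)\le \tfrac12\phi(a)+\tfrac12\phi(b)$ directly. A self-contained alternative that avoids the boundary subtlety altogether is to assume without loss of generality $0 < b$ (the case $b=0$ making both sides of \eqref{eq:reduced-convexity} equal to $a^r/2^r$ versus $a^r/2$, trivially consistent), then set $s = a/b \in [0,\infty)$ and reduce to the single-variable inequality $\left(\tfrac{1+s}{2}\right)^r \le \tfrac{1+s^r}{2}$, whose difference vanishes at $s=1$ with a verifiable minimum there. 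Either route closes the argument, and I would favor the Jensen formulation for brevity while handling the endpoint by the continuity remark.
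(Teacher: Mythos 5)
Your proposal is correct and follows essentially the same route as the paper: substitute $a=|\kappa_1|^p$, $b=|\kappa_2|^p$ and apply Jensen's inequality to the convex map $t\mapsto t^r$ for $r>1$. Your extra care about convexity at the endpoint $t=0$ (and the $b=0$ check) is a refinement the paper's terse proof omits, but the underlying argument is identical.
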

\begin{proof}
Define $a_1=|\kappa_1|^p\ge 0$ and $a_2=|\kappa_2|^p \ge 0$, and  since $r\ge 1$, then  $(a_1+ a_2)^{r}$ is convex. From Jensen inequality we then have $(\frac{a_1+ a_2}{2})^{r} \le \frac{1}{2}(a_1^{r}+ a_2^{r})$\, . Replacing back $a_1=|\kappa_1|^p$ and $a_2=|\kappa_2|^p $ completes the proof. 
\end{proof}

\begin{lemma}
 For $1\ge q\ge p\ge 0$,  $\left(\frac{|\kappa_1|^p + |\kappa_2|^p}{2}\right)^{\frac{1}{p}}\le \left(\frac{|\kappa_1|^q + |\kappa_2|^q}{2}\right )^{\frac{1}{q}}$ \, ,
 \label{lemma:pq}
\end{lemma}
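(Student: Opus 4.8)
The plan is to obtain Lemma~\ref{lemma:pq} as an immediate corollary of Lemma~\ref{lemma:r}: the statement is exactly the monotonicity of the two-term power mean in its exponent on $[0,1]$, and the entire analytic content (convexity, Jensen) has already been absorbed into Lemma~\ref{lemma:r}. Thus I expect the argument to be a single substitution followed by applying an order-preserving power map, with the only genuine care needed at the endpoint $p=0$.

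First I would treat the interior case $q\ge p>0$. Setting $r=q/p\ge 1$ makes Lemma~\ref{lemma:r} applicable (when $r=1$, i.e. $p=q$, the claimed inequality is a trivial equality, so assume $r>1$). Because $p\cdot(q/p)=q$, Lemma~\ref{lemma:r} yields
\begin{align}
\left(\frac{|\kappa_1|^p + |\kappa_2|^p}{2}\right)^{q/p} \le \frac{|\kappa_1|^{q} + |\kappa_2|^{q}}{2}\,.
\end{align}
Both sides are nonnegative and $1/q>0$, so the map $x\mapsto x^{1/q}$ is increasing and preserves the inequality; since $(q/p)\cdot(1/q)=1/p$, raising both sides to the power $1/q$ gives
\begin{align}
\left(\frac{|\kappa_1|^p + |\kappa_2|^p}{2}\right)^{1/p} \le \left(\frac{|\kappa_1|^q + |\kappa_2|^q}{2}\right)^{1/q}\,,
\end{align}
which is precisely the assertion of the lemma for $p>0$.

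It remains to cover $p=0$, where $r=q/p$ is undefined and the left-hand side must be read as the limit $p\to 0^+$. Here I would hold $q$ fixed, note that the right-hand side does not depend on $p$, and pass to the limit in the inequality just established for all $p>0$; continuity of the exponential and logarithm then sends the left-hand side to the geometric mean $\sqrt{|\kappa_1\,\kappa_2|}$, giving $\sqrt{|\kappa_1\,\kappa_2|}\le \left(\tfrac12(|\kappa_1|^q+|\kappa_2|^q)\right)^{1/q}$.

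The one place that requires attention -- and the main obstacle -- is this $p=0$ boundary: the clean algebraic reduction is valid only for $p>0$, so one must separately justify that the left-hand expression extends continuously to its $p\to 0^+$ limit and that this limit equals the geometric mean (the fact that the theorem's limiting statement relies on, and that the companion lemma establishes). Away from that endpoint there is nothing to estimate; the result is a one-line consequence of Lemma~\ref{lemma:r}.
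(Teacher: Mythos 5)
Your proof is correct and takes essentially the same route as the paper: substitute $r=q/p\ge 1$ into Lemma~\ref{lemma:r} and raise both sides to the power $1/q$, which preserves the inequality. The only difference is that you explicitly handle the boundary cases $p=q$ (where $r=1$ falls outside the strict hypothesis $r>1$ of Lemma~\ref{lemma:r}) and $p=0$ (where the exponent $1/p$ is undefined and a limit must be taken), both of which the paper's one-line proof silently passes over.
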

\begin{proof}
Replacing $r$ by $\frac{q}{p}\ge 1$ in lemma~\ref{lemma:r}
$\left(\frac{|\kappa_1|^p + |\kappa_2|^p}{2}\right)^{\frac{q}{p}}\le  \frac{|\kappa_1|^q + |\kappa_2|^q}{2}$. Then taking the $q$ root on both sides does not change the order of the inequality and completes the proof. 
\end{proof}

\begin{lemma}
 For $1\ge p\ge 0$, 
 \begin{align}  \frac{\log |\kappa_1| + \log |\kappa_2|}{2} \le \log  \left (\frac{|\kappa_1|^p + |\kappa_2|^p}{2}\right)^{\frac{1}{p}}    \, ,
  \end{align}
  and thus $\left (\frac{|\kappa_1|^p + |\kappa_2|^p}{2}\right)^{\frac{1}{p}}$ is bounded from below. 
 \label{lemma:bounded}
\end{lemma}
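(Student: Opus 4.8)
The plan is to recognize the right-hand side as the logarithm of the power mean $M_p := \left(\frac{|\kappa_1|^p + |\kappa_2|^p}{2}\right)^{\frac{1}{p}}$ and the left-hand side as the logarithm of the geometric mean $\sqrt{|\kappa_1||\kappa_2|}$, so that the claim reduces to the classical fact that the geometric mean never exceeds a power mean of positive order. The cleanest route is a direct application of AM--GM rather than invoking Lemma~\ref{lemma:pq}, since the latter only compares two power means to each other and does not by itself exhibit a fixed lower bound; the core of the argument is therefore essentially a single two-term AM--GM step.

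First I would set $a = |\kappa_1|^p \ge 0$ and $b = |\kappa_2|^p \ge 0$ and apply the two-term AM--GM inequality $\frac{a+b}{2} \ge \sqrt{ab}$. Substituting back and using $\sqrt{|\kappa_1|^p\,|\kappa_2|^p} = (|\kappa_1|\,|\kappa_2|)^{\frac{p}{2}}$ gives $\frac{|\kappa_1|^p + |\kappa_2|^p}{2} \ge (|\kappa_1|\,|\kappa_2|)^{\frac{p}{2}}$. Since $p > 0$, raising both sides to the power $\frac{1}{p}$ preserves the inequality and yields $M_p \ge (|\kappa_1|\,|\kappa_2|)^{\frac{1}{2}} = \sqrt{|\kappa_1||\kappa_2|}$. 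Taking logarithms of both sides and using $\log (|\kappa_1||\kappa_2|)^{\frac{1}{2}} = \frac{1}{2}\big(\log|\kappa_1| + \log|\kappa_2|\big)$ then produces exactly the stated inequality. (Equivalently, one could cite concavity of $\log$ and Jensen's inequality, but the AM--GM form is the most economical.)

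The boundedness claim follows immediately: the estimate $M_p \ge \sqrt{|\kappa_1||\kappa_2|}$ is a constant independent of $p$, so $M_p$ is bounded below uniformly for $p \in (0,1]$. Read together with the monotonicity of Lemma~\ref{lemma:pq} (which makes $M_p$ decrease as $p \downarrow 0$), this supplies precisely the ingredient needed to guarantee that $\lim_{p\to 0} M_p$ exists, the remaining step toward the theorem.

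The only point requiring care is the degenerate case where a principal curvature vanishes, say $\kappa_2 = 0$. Then $\log|\kappa_2| = -\infty$, so the left-hand side is $-\infty$ and the logarithmic inequality holds vacuously; correspondingly the lower bound $\sqrt{|\kappa_1||\kappa_2|}$ collapses to $0$ and $M_p \ge 0$ holds trivially, so boundedness from below remains valid. I would therefore either restrict to $|\kappa_1|,|\kappa_2| > 0$ when stating the logarithmic form and dispatch the vanishing case by inspection, or simply observe that the exponentiated inequality $M_p \ge \sqrt{|\kappa_1||\kappa_2|}$ is the genuinely substantive statement and requires no positivity hypothesis at all. This edge case is the main (and only) obstacle, since the underlying inequality itself is elementary.
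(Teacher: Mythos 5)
Your proof is correct and is essentially the same as the paper's: the two-term AM--GM step you apply to $|\kappa_1|^p$ and $|\kappa_2|^p$ is precisely the two-point Jensen inequality for the concave $\log$ that the paper uses, and the remaining algebra (extracting the factor $p$, equivalently raising to the power $\tfrac{1}{p}$) is identical. Your explicit handling of the degenerate case $\kappa_i = 0$ and your remark that the lower bound $\sqrt{|\kappa_1|\,|\kappa_2|}$ is independent of $p$ are small but worthwhile clarifications the paper leaves implicit.
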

\begin{proof}
 The log function is a concave function. Thus, 
 \begin{align}
     \frac{\log |\kappa_1|^p + \log |\kappa_2|^p}{2} \le \log \left (\frac{|\kappa_1|^p + |\kappa_2|^p}{2}\right) 
 \end{align}
follows from Jensen's inequality. Since $\log |\kappa_1|^p + \log |\kappa_2|^p= p\, (\log |\kappa_1| + \log |\kappa_2|)$, then back to the Jensen's inequality $p\, \left (\frac{\log |\kappa_1| + \log |\kappa_2|}{2} \right) \le \log \left (\frac{|\kappa_1|^p + |\kappa_2|^p}{2}\right)  $ and so 

$\left (\frac{\log |\kappa_1| + \log |\kappa_2|}{2} \right) \le \frac{1}{p}\log  \left (\frac{|\kappa_1|^p + |\kappa_2|^p}{2}\right) $  completes the proof. 
\end{proof}
From lemma \eqref{lemma:pq} and lemma \eqref{lemma:bounded}, it follows that $\left (\frac{|\kappa_1|^p + |\kappa_2|^p}{2}\right)^{\frac{1}{p}}$ decreases as $p$ decreases and it is bounded from below. Therefore, it converges as $p\rightarrow 0$
.

\begin{theorem}
 \begin{align}   
 \lim_{p\rightarrow 0}\left (\frac{|\kappa_1|^p + |\kappa_2|^p}{2}\right)^{\frac{1}{p}}  = e^{\frac{1}{2}\log (|\kappa_1| \, |\kappa_2|)}  \, ,
  \end{align}
 \label{theorem:main}
\end{theorem}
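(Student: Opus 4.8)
The plan is to reduce the statement to a standard indeterminate-limit computation by passing to logarithms. Writing $a = |\kappa_1|$ and $b = |\kappa_2|$ for brevity, the convergence of the left-hand side is already guaranteed by Lemmas~\ref{lemma:pq} and~\ref{lemma:bounded}, which together show the expression is monotone in $p$ and bounded below. So it suffices to identify the value of the limit, and since $x \mapsto \log x$ and its inverse are continuous, I would compute $\lim_{p\to 0}\log\left(\frac{a^p+b^p}{2}\right)^{1/p} = \lim_{p\to 0}\frac{1}{p}\log\left(\frac{a^p+b^p}{2}\right)$ and then exponentiate.

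The central step is evaluating this last limit, which is of the indeterminate form $0/0$: as $p \to 0$ one has $a^p, b^p \to 1$, so the argument of the logarithm tends to $1$ and the logarithm tends to $0$, while the denominator $p$ also tends to $0$. I would resolve this either by L'Hôpital's rule or, equivalently, by a first-order Taylor expansion. Using $a^p = 1 + p\log a + O(p^2)$ (and similarly for $b^p$), one gets $\frac{a^p+b^p}{2} = 1 + \frac{p}{2}(\log a + \log b) + O(p^2)$; then $\log(1+x) = x + O(x^2)$ gives $\log\left(\frac{a^p+b^p}{2}\right) = \frac{p}{2}(\log a + \log b) + O(p^2)$, and dividing by $p$ yields $\frac{1}{2}(\log a + \log b) + O(p) \to \frac{1}{2}\log(ab)$. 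Exponentiating recovers $e^{\frac{1}{2}\log(|\kappa_1||\kappa_2|)}$, as claimed.

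The only genuine subtlety — and the step I would treat most carefully — is the degenerate case in which a principal curvature vanishes, say $\kappa_1 = 0$. Here the right-hand side as literally written involves $\log 0 = -\infty$ and must be interpreted as its limiting value $0$. For $p>0$ one has $a^p = 0$, so the left-hand side becomes $\left(\tfrac{b^p}{2}\right)^{1/p} = b\,2^{-1/p}$, which tends to $0$ as $p\to 0^+$ because $2^{-1/p}\to 0$. Thus both sides agree at the value $0 = \sqrt{|\kappa_1\kappa_2|}$, and the identity extends continuously to the boundary; this is precisely the sparse regime the regularizer is designed to reward, so it is worth flagging explicitly rather than quietly assuming $\kappa_1,\kappa_2 > 0$. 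Away from this case the Taylor argument above is entirely routine, so I do not anticipate any real difficulty beyond bookkeeping the error terms.
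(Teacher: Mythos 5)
Your proof is correct, but it takes a genuinely different route from the paper's. The paper first establishes that the limit exists (monotonicity in $p$ from Lemma~\ref{lemma:pq} plus the lower bound of Lemma~\ref{lemma:bounded}), and then identifies its value by a two-sided squeeze: the upper bound comes from the inequality $\log x \le \frac{1}{p}(x^p-1)$ applied to $x=\left(\frac{|\kappa_1|^p+|\kappa_2|^p}{2}\right)^{\frac{1}{p}}$ together with $\log x = \lim_{p\rightarrow 0}\frac{1}{p}(x^p-1)$, while the matching lower bound is exactly Lemma~\ref{lemma:bounded}, namely Jensen's inequality for the concave logarithm. You instead evaluate $\lim_{p\rightarrow 0}\frac{1}{p}\log\left(\frac{a^p+b^p}{2}\right)$ directly by the first-order expansion $a^p = 1 + p\log a + O(p^2)$ (equivalently L'H\^opital), which yields both the existence and the value of the limit in one stroke; your appeal to Lemmas~\ref{lemma:pq} and~\ref{lemma:bounded} for convergence is therefore redundant in your argument, though harmless. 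What your approach buys is directness and self-containment; what the paper's buys is reuse of its lemma machinery and the additional structural fact that the power mean decreases monotonically to the geometric mean as $p \downarrow 0$. A genuine merit of your write-up is the explicit treatment of the degenerate case $\kappa_1 = 0$ (or $\kappa_2 = 0$): there the paper's bounds involve $\log|\kappa_i| = -\infty$ and its squeeze only makes sense in the extended reals, whereas your observation that the left-hand side equals $|\kappa_2|\, 2^{-1/p} \rightarrow 0$ settles it cleanly --- and, as you note, this zero-curvature regime is precisely the sparse case the regularizer of Equation~5 is designed to reward.
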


\begin{proof}
 We use a  known inequality that  for $0\le p \le 1$,  $\log x \le \frac{1}{p}(x^p-1)$ . 
 Thus, for $x=\left (\frac{|\kappa_1|^p + |\kappa_2|^p}{2}\right)^{\frac{1}{p}}$  we have 
 \begin{align}
     \log \left (\frac{|\kappa_1|^p + |\kappa_2|^p}{2}\right)^{\frac{1}{p}}  \le \frac{1}{p}\left (\left (\frac{|\kappa_1|^p + |\kappa_2|^p}{2}\right)-1\right )
     \\ = \frac{1}{2} \left ( \frac{1}{p}(|\kappa_1|^p-1) +  \frac{1}{p}(|\kappa_2|^p-1) \right )
 \end{align}
 Taking the limit $p\rightarrow 0$ and using the equality  $\log x = \lim_{p\rightarrow 0}\frac{1}{p}(x^p-1)$ (use L'H\^opital rule to check) we finally obtain
 \begin{align}
    \lim_{p\rightarrow 0} \log \left (\frac{|\kappa_1|^p + |\kappa_2|^p}{2}\right)^{\frac{1}{p}}  & \le \frac{\log |\kappa_1|+\log |\kappa_2|}{2}
 \end{align}
 Thus the $\lim_{p\rightarrow 0} \log \left (\frac{|\kappa_1|^p + |\kappa_2|^p}{2}\right)^{\frac{1}{p}}$ is bounded by $\frac{\log |\kappa_1|+\log |\kappa_2|}{2}$ from above and,  by lemma~\ref{lemma:bounded}, bounded from below  by the same quantity and so 
 \begin{align}
    \lim_{p\rightarrow 0} \log \left (\frac{|\kappa_1|^p + |\kappa_2|^p}{2}\right)^{\frac{1}{p}}=\frac{\log |\kappa_1|+\log |\kappa_2|}{2}\, .
 \end{align}
 Taking the exponential from both sides completes the proof. 
\end{proof}

\section{In Depth Analysis on the SOTA approaches} \label{supp_sec:more_experiments}

In this section we present in details more experiments we have conducted for the GC analysis and understanding. In the following subsections we discuss the results for Middlebbury Dataset, and for our 3D synthetic scenes, respectively.

\subsection{Middlebury Dataset} \label{supp_subsec:middlebury_dataset}

In the main paper, we presented a normalized histogram distribution in~\autoref{fig:middlebury_dataset_curvature_distribution}, which shows the LGC metric for FoundationStereo, DepthPro, RAFT-Stereo, Selective-IGEV, and in~\autoref{fig:histogram-Gaussian-curvature-middleburry} for the ground truth (GT). We also present in~\autoref{tab:middlebury_benchmark} a ranking that includes several benchmarking metrics alongside LGC. We selected these techniques for inclusion in the main text because we had already tested their code on our 3D synthetic scenes. Additionally, they represent key categories: FoundationStereo belongs to Group A (new SOTA), RAFT-Stereo and Selective-IGEV to Group B (previous SOTA), and DepthPro is the best-performing monocular depth estimation (MDE) method in terms of depth reconstruction.

Here, in the supplementary material (see~\autoref{fig:middlebury_dataset_curvature_distribution_suppmaterial}), we take advantage of the additional space to extend our analysis to all evaluated techniques. We emphasize our earlier observation: Group A approaches not only achieve the lowest average disparity errors in the Middlebury ranking but also tend to minimize Gaussian Curvature—thus maximizing the LGC metric. Notably, FoundationStereo, MonoStereo, LG-Stereo, and DEFOM-Stereo exhibit LGC values above 65\%, while Selective-IGEV, RAFT-Stereo, DLNR, and CREStereo show LGC values below 36\%.

\begin{figure*}[!ht]
    \begin{center}
    \includegraphics[width=\textwidth]{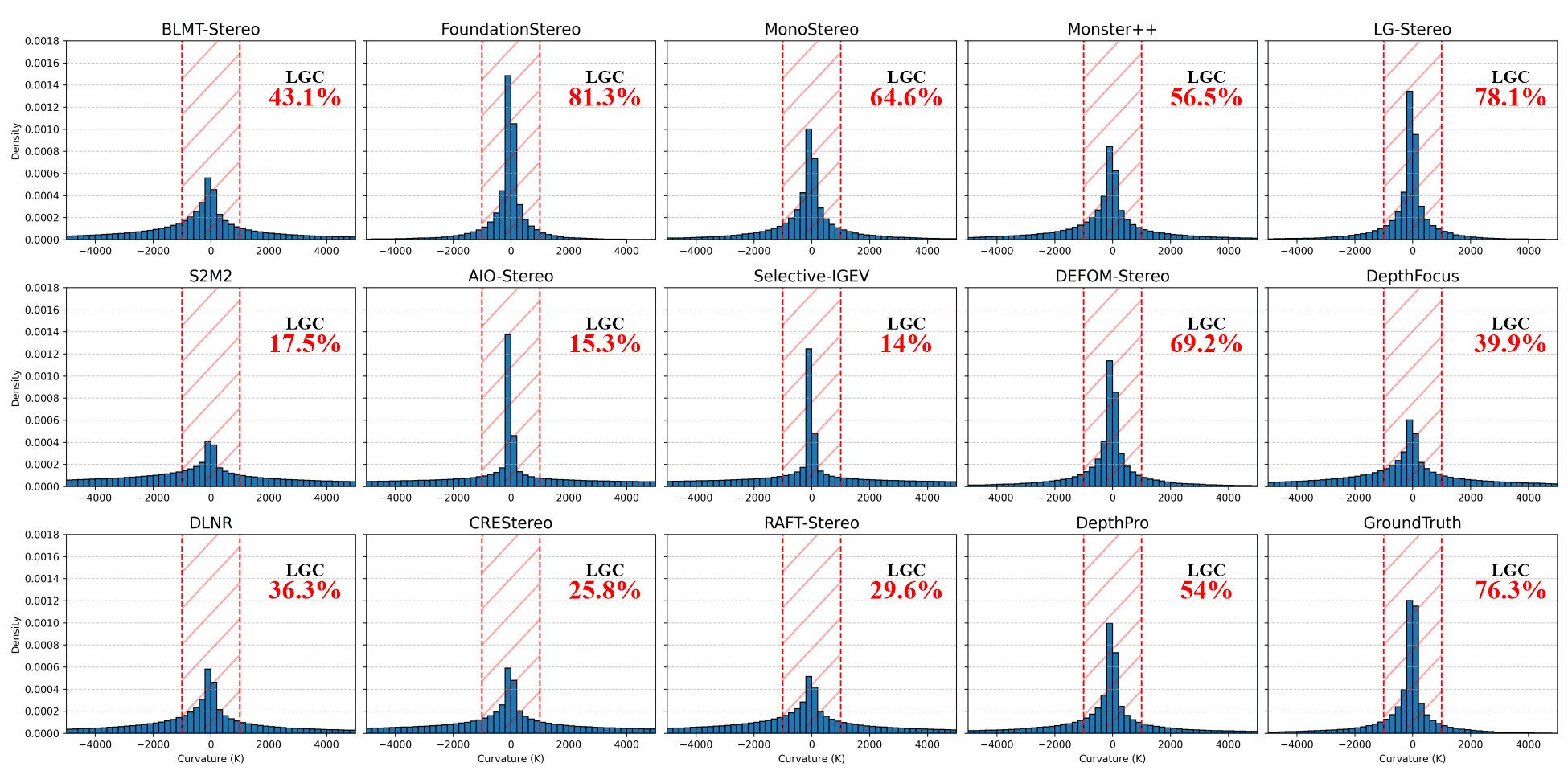}
    \end{center}
        \caption{\textbf{Curvature Distribution:} each plot presents a normalized histogram of the GC distribution for all 15 training images from the Middlebury dataset. We discarded the highest 20\% of $|K|$ values, and plotted the remaining $K$ values within $[-5,000, 5,000]m^{-2}$ in 50 bins uniformly distributed.}
    \label{fig:middlebury_dataset_curvature_distribution_suppmaterial}
\end{figure*}

We also present a detailed analysis of the Middlebury ranking and Gaussian Curvature for each of the 15 training images. While the main paper shows the average disparity error aggregated across all 15 images, \autoref{fig:middlebury_dataset_ranking_per_image} provides a per-image breakdown, showing each method’s ranking position alongside its average disparity error for each individual image. In \autoref{fig:middlebury_dataset_curvature_per_image}, we show the average Gaussian Curvature for each of the 15 training images. 

It can be observed in~\autoref{fig:middlebury_dataset_ranking_per_image} that the Group A approaches consistently achieve the lowest average disparity error across all images. Furthermore, \autoref{fig:middlebury_dataset_curvature_per_image} shows that the Gaussian Curvature values produced by Group A approaches tend to have lower absolute magnitudes ($|K|$) compared to those of Group B approaches.

\begin{figure*}[!ht]
    \begin{center}
    \includegraphics[width=\textwidth]{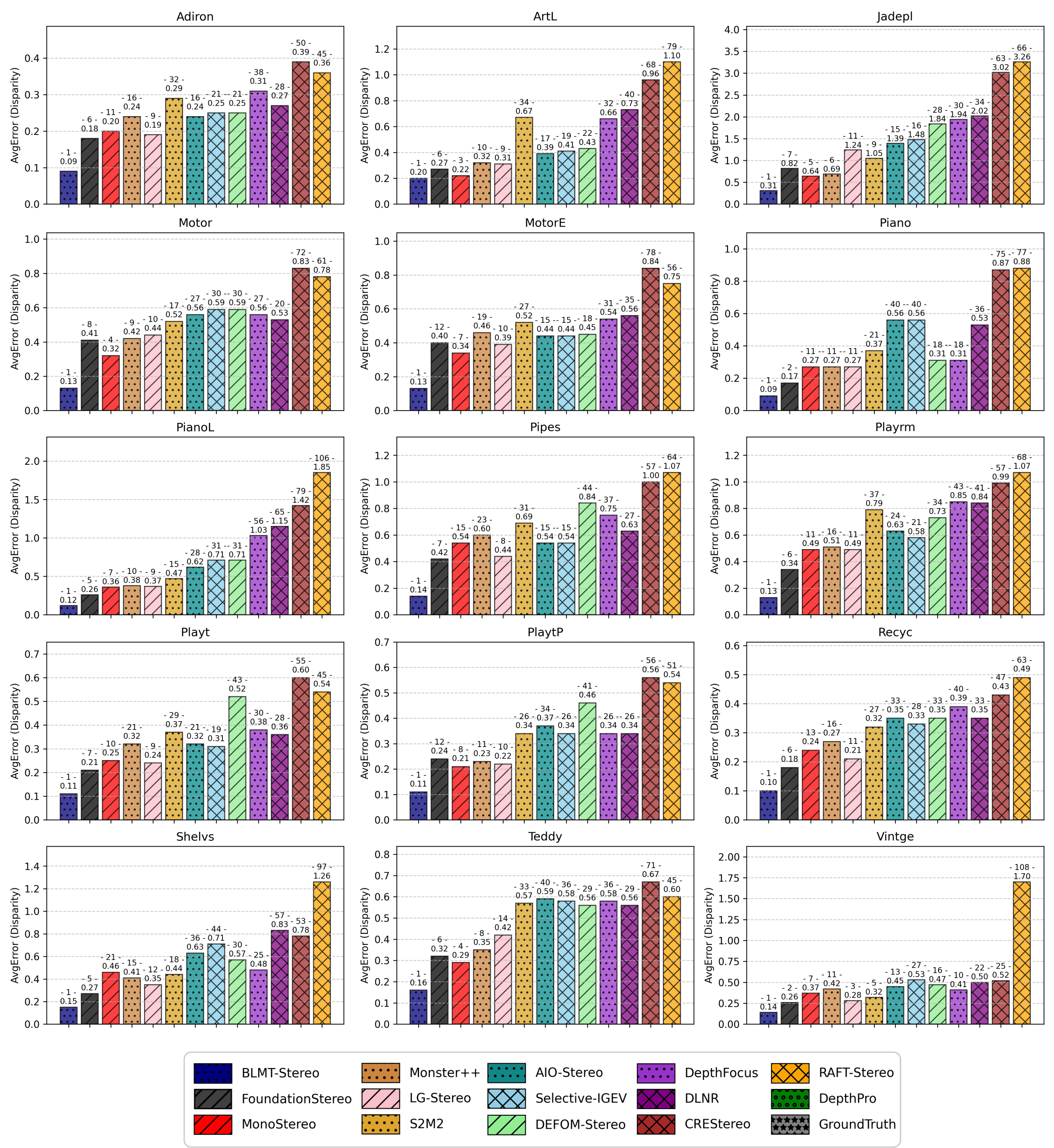}
    \end{center}
        \caption{\textbf{Middlebury Ranking:} Average disparity error (AvgError) per image (see \autoref{supp_subsec:middlebury_dataset} for more information).}
    \label{fig:middlebury_dataset_ranking_per_image}
\end{figure*}

\begin{figure*}[!ht]
    \begin{center}
    \includegraphics[width=0.90\textwidth]{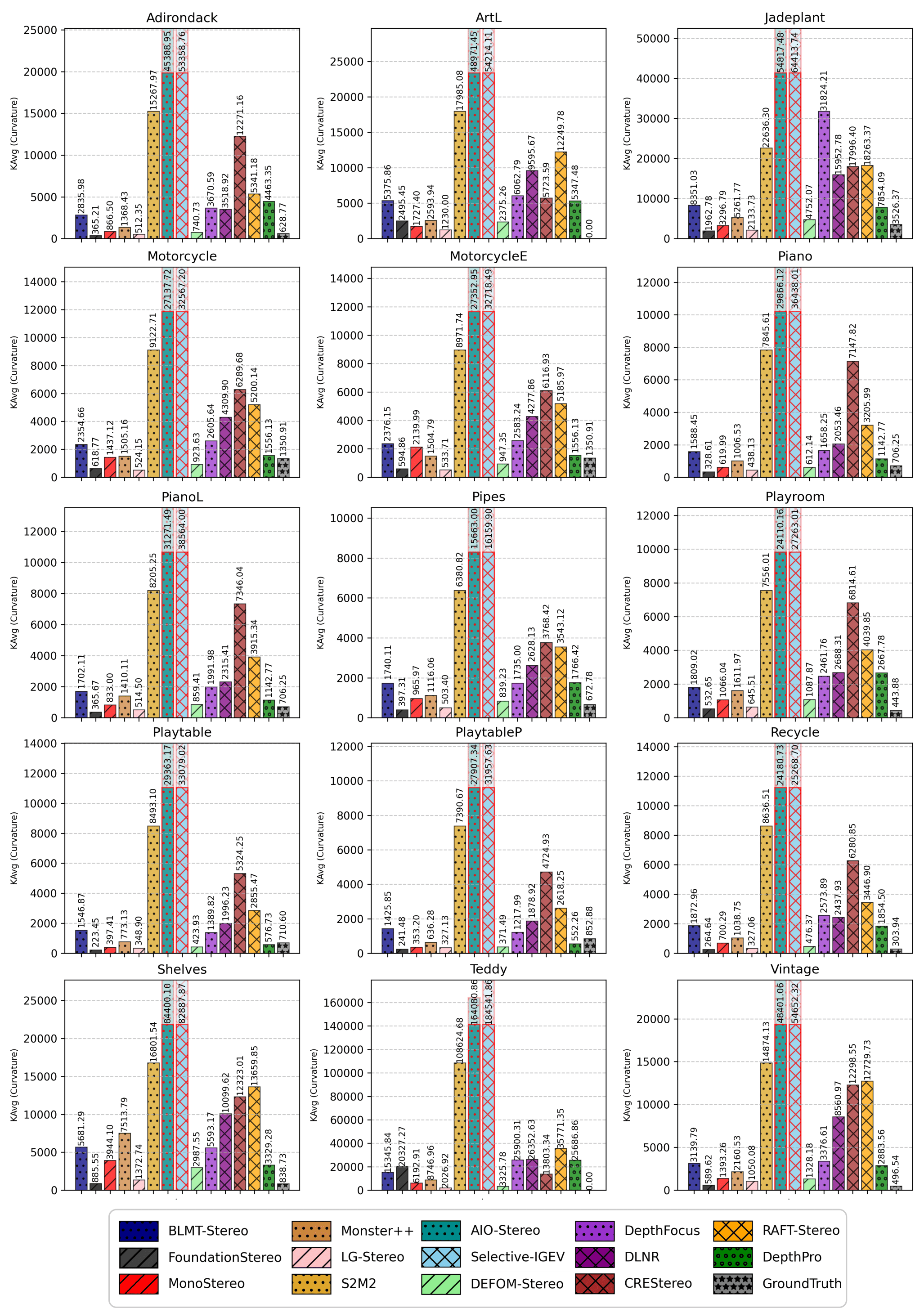}
    \end{center}
        \caption{\textbf{Middlebury Curvature:} Average Absolute Gaussian Curvature (Avg$|K|$) per image (see \autoref{supp_subsec:middlebury_dataset} for more information).}
    \label{fig:middlebury_dataset_curvature_per_image}
\end{figure*}

\subsection{3D Synthetic Scenes} \label{supp_subsec:synthetic_dataset}

In the following experiments, we analyze quantitatively and qualitatively the depth and curvature results for each of the 5 scenes: Box\_Rotation\_45, Box\_Rotation\_90, Cylinder, Sphere, and MainScene. As already mentioned, we obtained the source code of: Group A) FoundationStereo; Group B) Selective-IGEV, and RAFT-Stereo; and MDE) DepthPro, and DepthAnythingV2.

The results of each aforementioned approach are shown in Figures~\ref{fig:synthetic_dataset_FoundationStereo},~\ref{fig:synthetic_dataset_Selective-IGEV},~\ref{fig:synthetic_dataset_RAFT-Stereo},~\ref{fig:synthetic_dataset_DepthPro}, and~\ref{fig:synthetic_dataset_DepthAnythingV2}, respectively. For the Group A and Group B approaches, which estimate the disparity of a scene, we computed depth using the equation $Depth = \frac{f \cdot b}{d}$, where $f$ is the focal length in pixels, $b$ is the baseline in meters, and $d$ is the disparity value in pixels. Note that no pixel offset correction is needed ($doffs = 0$), as our simulated environment ensures that the left and right images are rectified. The resulting depth for each technique is shown in the first column of its respective figure.

For the Group A and Group B approaches, we also computed the difference between each method’s estimated depth, denoted as $D_{\text{technique}}$, and the expected ground truth ($GT$). The second column in each technique’s figure presents a visual comparison of the difference $GT - D_{\text{technique}}$ for each scene in the 3D synthetic scenes. Since depth values are strictly positive, this difference can yield both positive (red) and negative (blue) values. A positive value ($GT - D_{\text{technique}} > 0$, shown in red) indicates that the method predicted a depth further ahead than it should have. Conversely, a negative value ($GT - D_{\text{technique}} < 0$, shown in blue) indicates that the method predicted a depth further back than the correct position.

For all techniques, we computed the Gaussian Curvature, which is shown in the third column of each figure. To facilitate the analysis, we masked regions where $|K| < \text{threshold}$ (with a threshold of 1000~$\mathrm{m}^{-2}$), displaying them in black. This masking step allows us to focus on regions with higher curvatures and better understand the behavior of each method.

All relevant insights and observations are described in the figure's captions to support and streamline the reading process. We also omitted some parts of the figures for Blind Review.

\begin{figure*}[!ht]
    \begin{center}
    \includegraphics[width=0.95\textwidth]{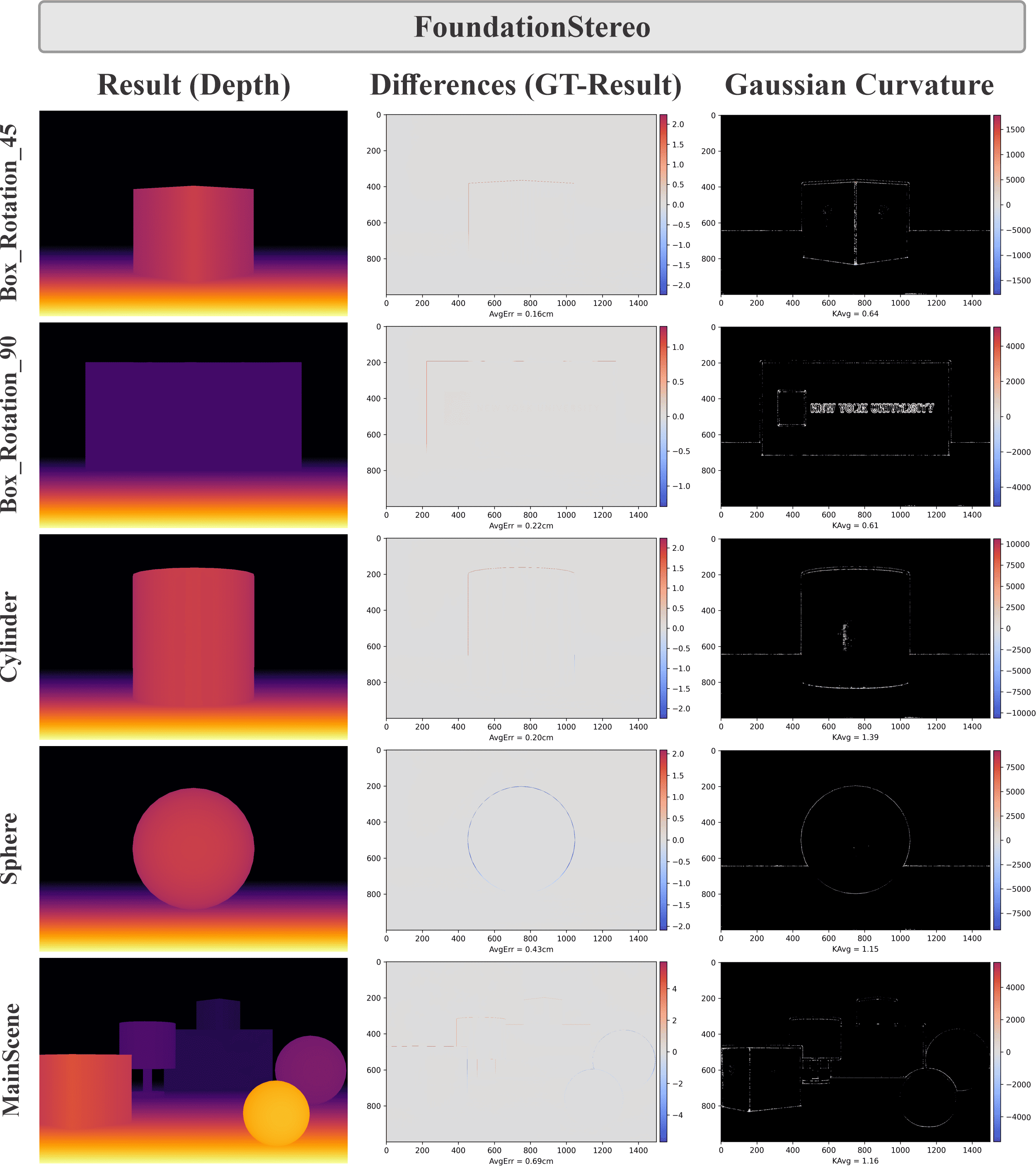}
    \end{center}
        \caption{Results of \textbf{FoundationStereo} on our 3D synthetic scenes. FoundationStereo achieved a depth error of less than one centimeter across all five scenes. Additionally, it estimated the lowest Gaussian Curvature among all evaluated approaches. Notably, regions with $|K| > 1000~\mathrm{m}^{-2}$ appear only near edges. Once again, FoundationStereo demonstrates its position as the best-performing method in both the Middlebury dataset and our 3D synthetic scenes.}
    \label{fig:synthetic_dataset_FoundationStereo}
\end{figure*}

\begin{figure*}[!ht]
    \begin{center}
    \includegraphics[width=0.95\textwidth]{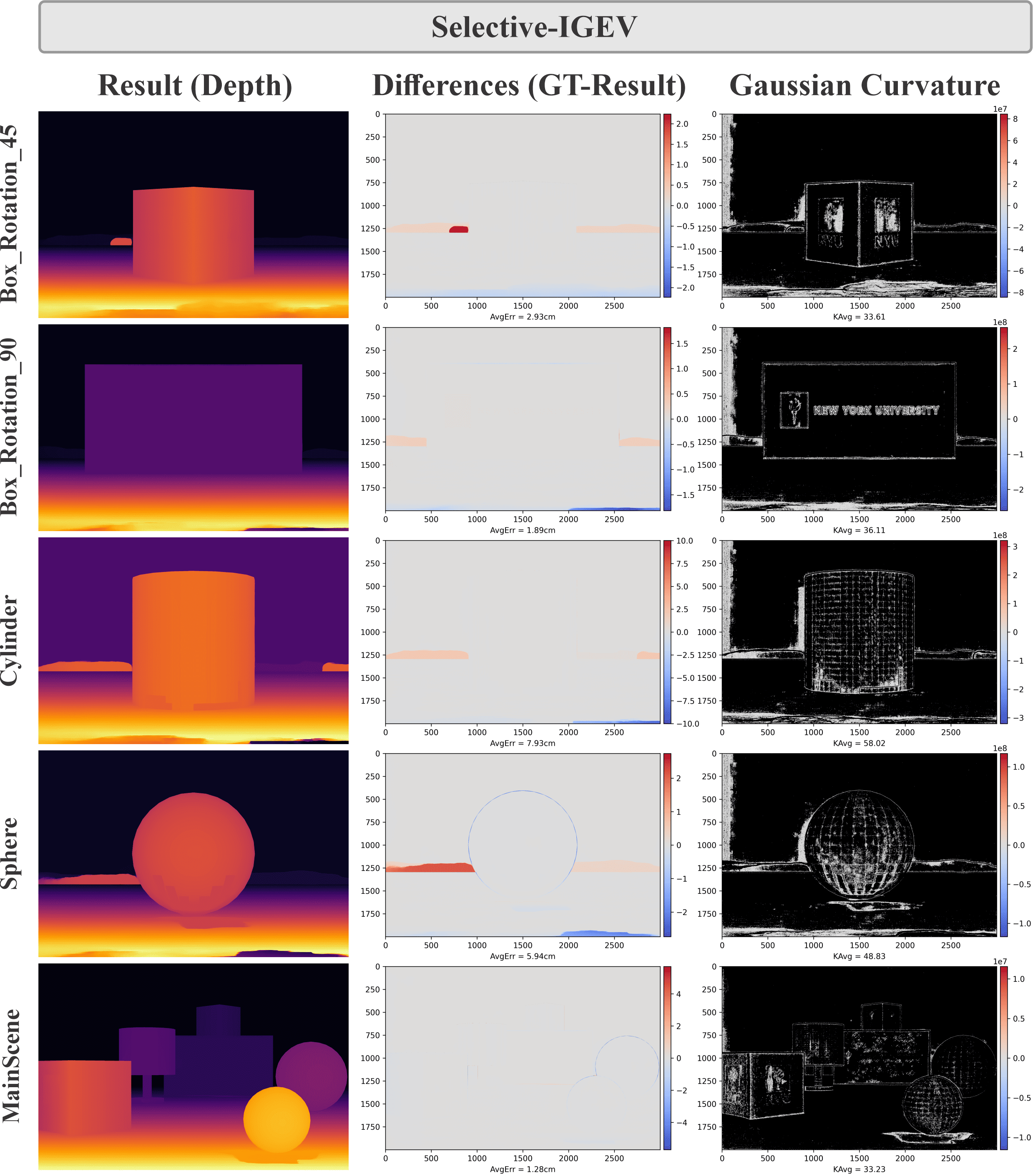}
    \end{center}
        \caption{Results of \textbf{Selective-IGEV} on our 3D synthetic scenes. Selective-IGEV showed some inaccuracies in depth estimation and produced higher $|K|$ values, as observed in the curvature plots. While it exhibited the highest Gaussian Curvature among all methods in the Middlebury dataset, in our 3D synthetic scenes it estimated lower curvature values than RAFT-Stereo (see~\autoref{fig:synthetic_dataset_avgerr_x_curvature_analysis} in the main paper).}
    \label{fig:synthetic_dataset_Selective-IGEV}
\end{figure*}

\begin{figure*}[!ht]
    \begin{center}
    \includegraphics[width=0.95\textwidth]{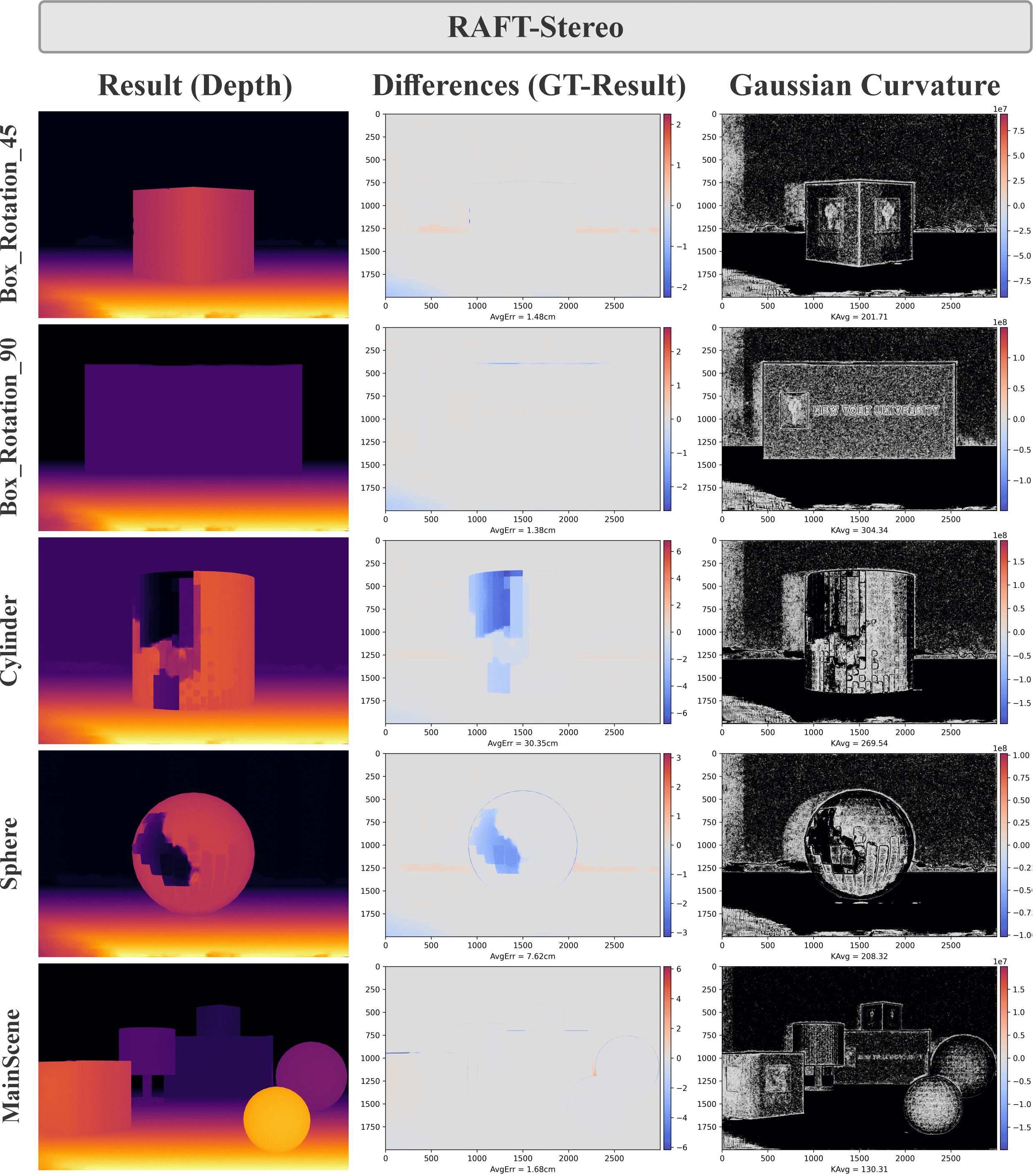}
    \end{center}
        \caption{Results of \textbf{RAFT-Stereo} on our 3D synthetic scenes. RAFT-Stereo exhibited several inconsistencies in depth estimation. Notably, the Cylinder and Sphere scenes showed average errors exceeding 30~cm and 7~cm, respectively. Despite these limitations, RAFT-Stereo performed reasonably well in the remaining scenes. However, it produced the highest Gaussian Curvature among all evaluated approaches in our 3D synthetic scenes, including both Group A and Group B methods.}
    \label{fig:synthetic_dataset_RAFT-Stereo}
\end{figure*}

\begin{figure*}[!ht]
    \begin{center}
    \includegraphics[width=0.95\textwidth]{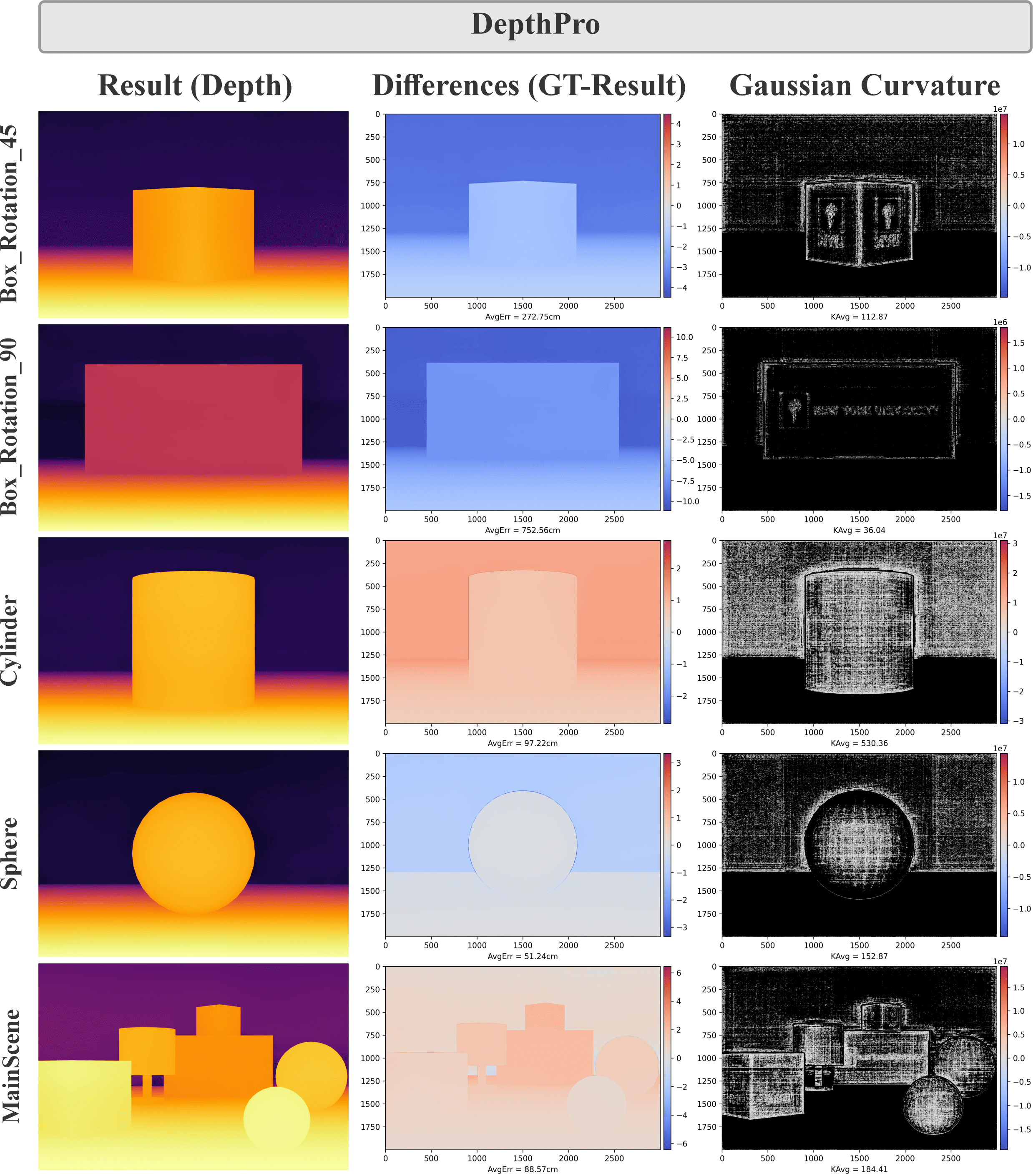}
    \end{center}
        \caption{Results of \textbf{DepthPro} on our 3D synthetic scenes. DepthPro estimates depth in meters and also predicts focal length from a single image. In the main paper, we do not report DepthPro's average depth error due to observed limitations related to real-scale accuracy. As shown in the depth difference maps, DepthPro's minimum and maximum errors range from 51~cm to 752~cm. On the other hand, DepthPro produces reasonably accurate Gaussian Curvature estimates for flat surfaces such as the ground, boxes, and walls. As discussed in the paper, DepthPro achieves an LGC of approximately 54\% in Middlebury Dataset, positioning it between the Group A methods, which tend to produce higher LGC values, and Group B methods, which generally show lower LGC values.}
    \label{fig:synthetic_dataset_DepthPro}
\end{figure*}

\begin{figure*}[!ht]
    \begin{center}
    \includegraphics[width=0.65\textwidth]{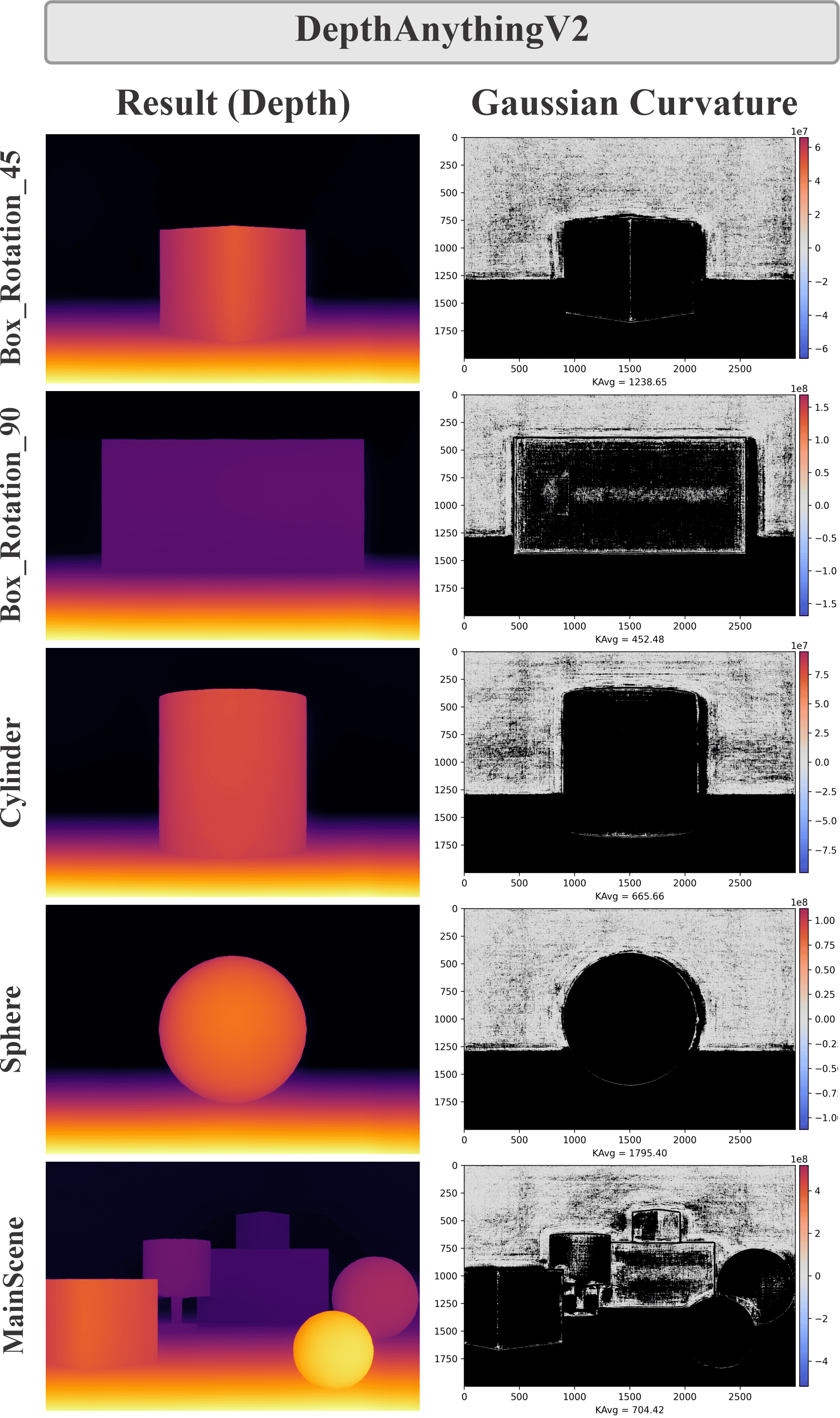}
    \end{center}
        \caption{Results of \textbf{DepthAnythingV2} on our 3D synthetic scenes. DepthAnythingV2 provides relative depth; therefore, we do not report its average depth error. Qualitatively, the depth reconstruction appears consistent, correctly preserving the relative positioning of objects in the scene. However, DepthAnythingV2 exhibits the highest average Gaussian Curvature in our 3D synthetic scenes, indicating limitations in capturing intrinsic geometric relationships. As seen in the curvature plots, high $|K|$ values are distributed across the entire scene—especially on the wall—despite the expected curvature being nonzero only near edges and on spherical surfaces.}
    \label{fig:synthetic_dataset_DepthAnythingV2}
\end{figure*}

\end{document}